    \setlist{nosep}
\theoremstyle{thmstyleone}
\newtheorem{theorem}{Theorem}
\theoremstyle{thmstyletwo}
\theoremstyle{thmstylethree}
\newcommand{\equat}[1]{Eq.~\eqref{#1}}
  \let\gls\relax
  \let\Gls\relax
  \let\glspl\relax
  \let\Glspl\relax
  \let\acrfull\relax
\newcommand{\graytt}[1]{\colorbox{gray!15}{\texttt{#1}}}
\begin{document}

\title{The 4/$\delta$ Bound: Designing Predictable LLM-Verifier Systems for Formal Method Guarantee}


\abstract{The integration of Formal Verification tools with Large Language Models (LLMs) offers a path to scale software verification beyond manual workflows. However, current methods remain unreliable: without a solid theoretical footing, the refinement process acts as a black box that may oscillate, loop, or diverge. This work bridges this critical gap by developing an LLM-Verifier Convergence Theorem, providing the first formal framework with provable guarantees for termination in multi-stage verification pipelines. We model the interaction not as a generic loop, but as a sequential absorbing Markov Chain comprising four essential engineering stages: \texttt{CodeGen}, \texttt{Compilation}, \texttt{InvariantSynth}, and \texttt{SMTSolving}. We prove that for any non-zero stage success probability ($\delta > 0$), the system reaches the \texttt{Verified} state almost surely. Furthermore, because of the sequential nature of the pipeline, we derive a precise latency bound of $\mathbb{E}[n] \leq 4/\delta$. We stress-tested this prediction in an extensive empirical campaign comprising over 90,000 trials. The results match the theory with striking consistency: every run reached verification, and the empirical convergence factor clustered tightly around $C_f\approx 1.0$, confirming that the $4/\delta$ bound accurately mirrors system behavior rather than serving as a loose buffer. Based on this data, we identify three distinct operating zones -- marginal, practical, and high-performance -- and propose a dynamic calibration strategy to handle parameter drift in real-world environments. Together, these contributions replace heuristic guesswork with a rigorous architectural foundation, enabling predictable resource planning and performance budgeting for safety-critical software.}

\keywords{Formal Verification, Large Language Models, SMT Solvers, Bounded Model Checking, Automated Program Repair, Specification Synthesis, ESBMC}

\author*[1,2]{\fnm{Pierre} \sur{Dantas}}\email{pierre.dantas@gmail.com, \href{https://orcid.org/0000-0001-6390-9340}{0000-0001-6390-9340}}

\author[1,2]{\fnm{Lucas} \sur{Cordeiro}}\email{lucas.cordeiro@manchester.ac.uk, \href{https://orcid.org/0000-0002-6235-4272}{0000-0002-6235-4272}}
\equalcont{These authors contributed equally to this work.}

\author[1]{\fnm{Youcheng} \sur{Sun}}\email{techieyoucheng@gmail.com, \href{https://orcid.org/00000-0002-1893-6259}{0000-0002-1893-6259}}
\equalcont{These authors contributed equally to this work.}

\author[2]{\fnm{Waldir} \sur{Junior}}\email{waldirjr@ufam.edu.br, \href{https://orcid.org/0000-0003-3095-0042}{0000-0003-3095-0042}}
\equalcont{These authors contributed equally to this work.}

\affil*[1]{\orgdiv{Dept. of Computer Science}, \orgname{The University of Manchester}, \country{UK}}

\affil[2]{\orgdiv{Dept. of Electrical Engineering}, \orgname{Federal University of Amazonas (UFAM)}, \country{Brazil}}


\maketitle


\newacronym{llm}{LLM}{large language model}
\newacronym{ml}{ML}{machine learning}
\newacronym{dnn}{DNN}{deep neural network}
\newacronym{api}{API}{application programming interface}
\newacronym{ai}{AI}{Artificial Intelligence}
\newacronym{dafny}{Dafny}{Deductive Algorithm Verification System}
\newacronym{fvel}{FVEL}{Formal Verification with Evolutionary Learning}
\newacronym{cve}{CVE}{Common Vulnerabilities and Exposures}
\newacronym{sva}{SVA}{SystemVerilog Assertions}
\newacronym{cicd}{CI/CD}{continuous integration and continuous deployment}
\newacronym{rtl}{RTL}{register-transfer level}
\newacronym{nlp}{NLP}{natural language processing}
\newacronym{smt}{SMT}{Satisfiability Modulo Theories}
\newacronym{gptq}{GPTQ}{generative pre-trained transformer quantization}
\newacronym{peft}{PEFT}{Parameter-Efficient Fine-Tuning}
\newacronym{lora}{LoRA}{low-rank adaptation}
\newacronym{fveler}{FVELER}{formal verification with evolutionary learning and enhanced reasoning}
\newacronym{formai}{FormAI}{Formal Verification with Artificial Intelligence}
\newacronym{ltl}{LTL}{Linear Temporal Logic}
\newacronym{ctl}{CTL}{Computation Tree Logic}
\newacronym{smc}{SMC}{Symbolic Model Checking}
\newacronym{bdd}{BDD}{Binary Decision Diagrams}
\newacronym{bmc}{BMC}{Bounded Model Checking}
\newacronym{esbmc}{ESBMC}{Efficient SMT-based Context-Bounded Model Checker}
\newacronym{goto}{GOTO}{Generalized Operational Three-Address Output}
\newacronym{ast}{AST}{Abstract Syntax Tree}
\newacronym{vc}{VC}{Verification Condition}
\newacronym{cegar}{CEGAR}{Counterexample-Guided Abstraction Refinement}
\newacronym{icl}{ICL}{In-Context Learning}
\newacronym{cpu}{CPU}{central processing unit}
\newacronym{gpu}{GPU}{graphics processing unit}
\newacronym{iot}{IoT}{Internet of Things}
\newacronym{rag}{RAG}{retrieval-augmented generation}
\newacronym{ssa}{SSA}{Static Single Assignment}
\newacronym{sat}{SAT}{Boolean Satisfiability Problem}
\newacronym{gpt4}{GPT-4}{Generative Pre-trained Transformer-4}
\newacronym{ease}{EASE}{Evaluation and Assessment in Software Engineering}
\newacronym{fpv}{FPV}{Formal Property Verification}
\newacronym{iqr}{IQR}{interquartile range}
\newacronym{vmc}{VMC}{Vectorized Monte Carlo}


\newpage
\section{Introduction}\label{sec:intro}

Formal software verification is essential in areas such as aerospace~\citep{Cofer2018}, medical devices~\citep{Hatcliff2012}, and autonomous systems~\citep{Luckcuck2019}. It uses mathematical methods to check that systems work as intended. For example, techniques such as \gls{bmc}~\citep{Biere1999} analyze software to detect hidden problems, such as numerical errors and memory-safety issues. Regular testing often misses these kinds of issues. Tools such as \gls{esbmc}~\citep{Gadelha2018ESBMC, Cordeiro2012} help with this process~\citep{Clarke2018Handbook}. However, a significant specification bottleneck limits widespread adoption: the extensive manual effort required to formulate precise formal specifications from ambiguous requirements~\citep{Woodcock2009}.

We now have a few solid ways to handle this problem, thanks to the recent arrival of \glspl{llm}. These models are proving incredibly useful; for example, they can easily generate formal specifications~\citep{Liu2025PropertyGPT}, construct tricky loop invariants~\citep{Pirzada2024LLM}, and significantly speed up automated program repair~\citep{Tihanyi2025New}. Despite the promising results mentioned, a central problem remains: the process that improves the use of \glspl{llm} and verifiers does not always guarantee precise and reliable outcomes~\citep {Liu2025PropertyGPT, Pirzada2024LLM, Tihanyi2025New}. Although researchers have developed strong theoretical foundations for both \glspl{llm}~\citep{Vaswani2017Attention,wei2022emergent} and Formal Verification tools~\citep{Clarke2018Handbook,Biere1999,Gadelha2018ESBMC} individually, they have not yet mathematically analyzed how these components behave when used together in an iterative process~\citep{beckert2024towards,Huang2023}. This gap causes the system to behave unpredictably, making it unsafe to use in critical areas.

\textbf{The Fundamental Gap addressed in this work}: Unpredictable \gls{ai}-Verifier Interactions. We cannot predict how \gls{ai} and Verifiers will interact in multi-stage workflows. Mixing statistical \gls{ai} components with reliable, deterministic verification tools creates a core conflict that current methods have not solved mathematically. Even though specific tests look promising~\citep{Pirzada2024LLM,Lin2024FVEL}, we lack a reliable convergence theory for the repeated refinement cycles typical of modern verification pipelines. Most existing approaches treat LLM refinement as a generic ``black box'' loop, ignoring the sequential engineering dependencies required to build valid proofs. This missing piece creates three problems that prevent the deployment of these systems in real-world applications:

\begin{enumerate}[topsep=1ex, itemsep=1ex]
    \item \textbf{Unpredictable Termination}: Without mathematical guarantees that the sequential process will converge, the refinement can get stuck in intermediate stages (e.g., oscillating between syntax errors and logical failures) or wander off track. This unpredictability makes it unreliable for safety-critical scenarios like flight control, where a guaranteed end to the process is mandatory.

    \item \textbf{Unbounded Resource Consumption}: The absence of a defined step limit precludes the configuration of reliable timeouts. In a multi-stage pipeline, costs accumulate; without a theoretical ceiling, the process could run out of control, consuming all available memory or processing time until the application crashes.
    
    \item \textbf{Lack of Performance Predictability and Stability}: It is currently a guessing game. Engineers have no systematic way to connect an \gls{llm}'s capability to its total verification latency. Furthermore, the assumption that LLM performance is constant (stationary) is often violated in practice, leaving engineers without strategies to detect or mitigate performance drift during operation.
\end{enumerate}

These limitations can lead to serious problems in the real world, making systems unsafe and rendering resource planning impossible.

\textbf{Our Proposed Approach}: A Convergence Theorem for Sequential \gls{llm}-Verification. We tackle the problem of guarantees using a new Convergence Theorem based on a sequential absorbing Markov Chain. We model the verification process not as a generic loop but as a structured pipeline comprising four essential engineering stages: \graytt{CodeGen}, \graytt{Compilation}, \graytt{InvariantSynth}, and \graytt{SMTSolving}.

Figure~\ref{fig:convergence-framework} illustrates our integrated \gls{llm}-Verifier Convergence Framework. The diagram depicts the sequential flow from the initial \graytt{Unverified} state through four milestones -- \graytt{CodeGen} ($s_1$), \graytt{Compilation} ($s_2$), \graytt{InvariantSynth} ($s_3$), and \graytt{SMTSolving} ($s_4$) -- to the final \graytt{Verified} absorbing state ($s_5$). By modeling these dependencies as a sequential absorbing Markov Chain, where $\delta$ governs the transition to the next stage ($s_i \to s_{i+1}$), we derive rigorous theoretical guarantees ($\mathbb{E}[n] \le 4/\delta$). A feedback loop links real-world testing to the theoretical model, supporting the dynamic calibration strategy to handle parameter non-stationarity in deployment. By continuously monitoring the empirical success rate ($\hat{\delta}$) against the theoretical stability regions, this mechanism detects performance drift (e.g., when $\hat{\delta}$ drops into the marginal region) and triggers corrective actions -- such as context resets or temperature adjustments -- to restore the system to the optimal practical region.

\begin{figure}[htbp]
    \centering
    \caption{\gls{llm}-Verifier Convergence Framework. The model visualizes the sequential progression through four mandatory refinement stages ($s_1 \to s_4$), where $\delta$ represents the probability of advancing to the next stage and $1-\delta$ represents the probability of retrying the current stage. A feedback loop links real-world testing to the theoretical model, supporting the dynamic calibration strategy to handle parameter non-stationarity in deployment. By continuously monitoring the empirical success rate ($\hat{\delta}$) against the theoretical stability regions, this mechanism detects performance drift (e.g., when $\hat{\delta}$ drops into the marginal region) and triggers corrective actions -- such as context resets or temperature adjustments -- to restore the system to the optimal practical region}
    \includegraphics[width=\linewidth]{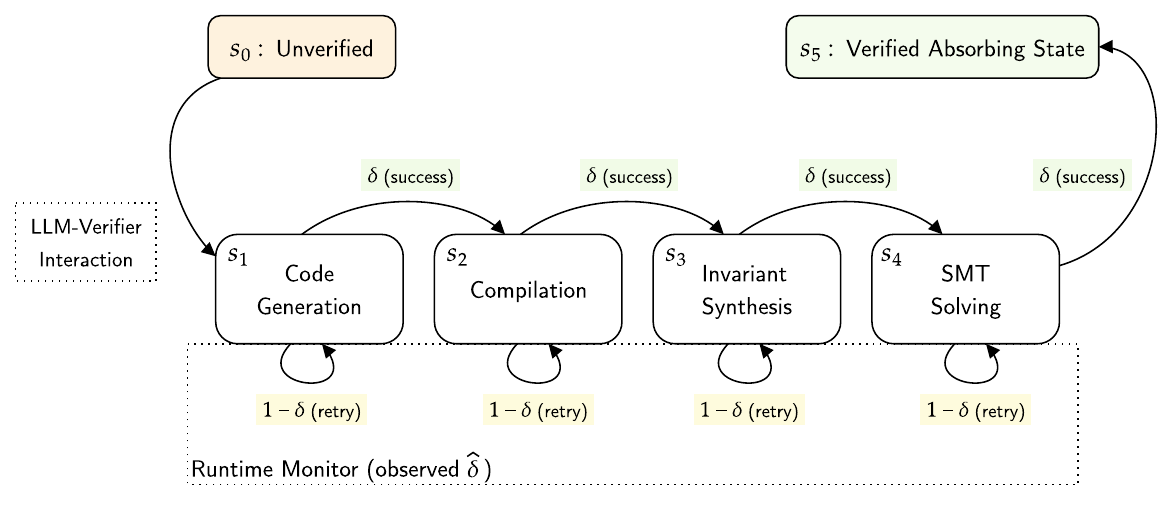}
    \label{fig:convergence-framework}
\end{figure}

In this model, $\delta$ quantifies the \gls{llm}'s single-attempt success rate for progressing from one stage to the next. A failure (probability $1-\delta$) necessitates a retry of the current stage, while success moves the artifact closer to the final \graytt{Verified} state. This structured view allows us to move beyond uncertain guesses~\citep{Zekri2024,Jianing2024} and prove that the system will definitely terminate.

Section~\ref{sec:main_result} contains the formal statement of our Theorem, which establishes that for any $\delta > 0$:

\begin{itemize}[topsep=1ex, itemsep=1ex]
    \item \textbf{Guaranteed Convergence}: If the program terminates and all reachable states are checked, the process is guaranteed to eventually traverse all four stages and reach the \graytt{Verified} state almost surely.
    
    \item \textbf{Bounded Latency}: The process is efficient. Because the system must pass through four distinct stages ($s_1 \to s_4$), we prove that the expected number of iterations is the sum of the expected time for each stage. This yields a precise bound of $\mathbb{E}[n] \le 4/\delta$, providing a worst-case ceiling for resource planning.
    
    \item \textbf{Exponential Tail Bounds}: The likelihood of the process needing more than $k$ steps drops off exponentially. This sharp decline enables reliable timeouts with high confidence.
\end{itemize}

To address the practical challenge of parameter non-stationarity (performance drift), our framework explicitly supports dynamic calibration. By continuously monitoring the empirical success rate against the theoretical $\delta$ regions, engineers can detect when an LLM drops into the marginal region and trigger context resets. Together, these results provide a comprehensive architectural foundation: our framework guarantees the process will finish, imposes a hard limit on computational costs ($4/\delta$), and provides a mechanism to maintain stability in dynamic environments.

To confirm our framework, we conducted an extensive study involving more than 90,000 simulations spanning the range 0.1 to 0.9. Our experimental results strongly support our theoretical predictions and show three distinct behavior patterns (see Table~\ref{tab:performance_regions}).

\begin{table}[htbp]
    \centering
    \caption{Performance regions based on $\delta$}
    \label{tab:performance_regions}
    \begin{tabular}{l c p{9cm}}
        \toprule
        \textbf{Region} & \textbf{Range} & \textbf{Description} \\
        \midrule
        \textbf{Marginal} & $\delta < 0.3$ & High variance; requires strict timeouts or dynamic calibration to prevent resource exhaustion. \\
        \textbf{Practical} & $0.3 \leq \delta \leq 0.6$ & Optimal for real-world verification tasks with balanced performance and predictable latency. \\
        \textbf{High-Performance} & $\delta > 0.6$ & Ideal for safety-critical systems requiring fast convergence and minimal variance. \\
        \bottomrule
    \end{tabular}
\end{table}

\textbf{Paper Organization}: This paper is structured as follows: Section~\ref{sec:related-work} reviews related work. Section~\ref{sec:theory} provides theoretical background. Section~\ref{sec:proposed_work} presents our convergence theory. Section~\ref{sec:methodology} details our validation methodology. Section~\ref{sec:experiments} discusses experimental results. Section~\ref{sec:discussion} interprets these findings. Finally, Section~\ref{sec:conclusion} summarizes the work and outlines future directions.


\section{Related Work} \label{sec:related-work}

The integration of \glspl{llm} into Formal Verification is a rapidly developing field built upon decades of foundational work in Formal Methods. This review examines current uses of \glspl{llm}, showing how practitioners integrate them, how these systems are structured, and how they perform. It shows how the field has developed over time, from traditional methods to modern systems that consider available resources.

\begin{enumerate}[topsep=1ex, itemsep=1ex]
    \item \textbf{Foundational and Classical Formal Methods (1999–2021)}: Early work centered on developing core verification technologies, such as \gls{bmc}~\citep{Biere1999}, and applying these techniques to safety-critical domains like medical devices~\citep{Hatcliff2012}, aerospace~\citep{Cofer2018}, and autonomous systems~\citep{Luckcuck2019}. However, the widespread adoption of these powerful methods stalled because the specification bottleneck forced practitioners to spend extensive manual effort translating ambiguous natural-language requirements into precise formal specifications~\citep{Woodcock2009}. The development of advanced tools, such as \gls{esbmc}~\citep{Gadelha2018ESBMC}, accelerated progress in the field; later updates expanded its capabilities, including support for C\texttt{++} templates~\citep{Monteiro2021}, and provided a strong software foundation. This foundation is essential for combining new statistical learning methods.

	\item \textbf{\gls{llm} Integration and Pattern Convergence (2023–2024)}: The rise of strong \glspl{llm} has started a new phase in research. This research is quickly focusing on three main ways to use these models together:

        \begin{enumerate}[topsep=1ex, itemsep=1ex]
        \item \textbf{Automated Specification and Invariant Generation}: This method uses \glspl{llm} to create formal rules and guidelines from everyday language or code, helping practitioners address a longstanding challenge in Formal Methods. Initially, the focus was on producing SystemVerilog Assertions to verify the hardware, which significantly improved coverage of the checks~\citep {Orenes2023Using}. Later work in 2024 also utilized \gls{llm}-generated invariants to improve the efficiency of \gls{bmc} itself~\citep{Pirzada2024LLM}.
        
        \item \textbf{Proof Synthesis}: This pattern involves employing \glspl{llm} to generate full or partial proofs for interactive theorem provers. Work in this field has progressed quickly. For example, systems like Baldur~\citep{First2023Baldur}  have achieved high accuracy rates and improved performance in environments such as Isabelle/HOL. Subsequent research in 2024 further enhanced proof synthesis in languages such as Dafny~\citep{Loughridge2024} by using iterative feedback and Chain-of-Thought prompting~\citep{Misu2024Towards, Wei2022}.
        
        \item \textbf{Dynamic, Feedback-Driven Approaches}: These systems deploy \glspl{llm} in a closed loop with verifiers, using error feedback to refine artifacts like specifications or proofs iteratively. Researchers established this approach using models like \gls{gpt4} alongside deductive verifiers~\citep{beckert2024towards}, and specialized frameworks such as \gls{fvel} later emerged to support fine-tuned theorem proving~\citep{Lin2024FVEL}.
    \end{enumerate}

    \item \textbf{Closed-Loop Systems (2025–Present)}: Recent studies are working on creating systems that can adjust and improve themselves automatically. A good example of this is the \gls{esbmc}-\gls{ai} framework, which helps fix code written in C using \gls{bmc}~\citep{Tihanyi2025New, Yiannis2024, Fakih2025}. Concurrently, research advanced property generation for C code~\citep{wang2025supporting}, \gls{rag} for smart contracts~\citep{Liu2025PropertyGPT}, automated function modeling~\citep{Deng2025LLM}, and Rust static analysis~\citep{Yao2023Leveraging}.
\end{enumerate}

The \gls{llm}-Verifier Convergence Theorem establishes the formal conditions under which a sequence of \gls{llm}-generated approximations converges to a verifier-correct solution. It moves beyond the empirical question of whether a specific prompt or fine-tuning technique works and provides a theoretical basis for why and how these methods succeed. With this foundation, we can recast the challenge of model compression not merely as preserving statistical fidelity, but as maintaining the logical convergence properties defined by the Theorem.

Table~\ref{tab:compressed_llm_advancements_chronological} summarizes the significant advancements in \gls{llm}-accelerated Formal Verification and model compression. 

\begin{table}[htbp]
\centering
\caption{Chronological advancement in \gls{llm}-accelerated Formal Verification}
\begin{tabular}{p{0.25\textwidth} p{0.33\textwidth} p{0.35\textwidth}}  
    \toprule  
    \textbf{Advancement} & \textbf{Description \& Key Features} & \textbf{Impact / Results}\\ 
    \midrule  
    \multicolumn{3}{c}{\textbf{Foundations and Early Integration (2023)}} \\
    \midrule
    Automated Proof Generation and Repair (Baldur)~\citep{First2023Baldur} & \glspl{llm} generate and repair whole proofs for Formal Verification (e.g., Isabelle/HOL) & State-of-the-art proof synthesis; 65.7\% of theorems proved automatically with Baldur+Thor\\\addlinespace

    Safety and Trustworthiness Verification~\citep{Huang2023} & Verification \& validation frameworks for \glspl{llm}, including runtime monitoring and regulatory compliance & Identifies gaps and proposes rigorous methods for \gls{llm} safety and trustworthiness\\\addlinespace
    
    Prompt Compression (LLMLingua)~\citep{Jiang2023}& Coarse-to-fine prompt compression for \glspl{llm}, maintaining semantic integrity at high compression ratios & Up to 20$\times$ prompt compression with little performance loss, accelerating inference\\
    \midrule
    
    \multicolumn{3}{c}{\textbf{Hybrid Systems and Early Optimization (2024)}} \\
    \midrule
    Model Compression Techniques~\citep{zhu2023survey, Wang2024meets, Dantas2024, Kim2025, Belhaouari2025, Li2025, Dantas2025} & Quantization, pruning, knowledge distillation, low-rank approximation, and hybrid methods for \glspl{llm} & Enable \gls{llm} deployment in resource-limited settings; up to 70\%+ compression with minimal loss\\\addlinespace
    
    \gls{llm}-Driven Invariant Generation for \gls{bmc}~\citep{Pirzada2024LLM} & \glspl{llm} generate loop invariants for \gls{bmc}, avoiding loop unrolling & Improves verification coverage for programs with unbounded loops using \gls{esbmc}\\\addlinespace

    Fine-Tuning and Inference Optimization~\citep{Wang2024meets, Kim2025} & Parameter-efficient fine-tuning (e.g., \gls{lora}), and inference optimization & Reduces resource overhead for adapting and deploying \glspl{llm}\\
    \midrule
    
    \multicolumn{3}{c}{\textbf{Advanced Closed-Loop and Specialized Compression (2025)}} \\
    \midrule
    
    Hybrid Verification (\glspl{llm} + Formal Methods)~\citep{Tihanyi2025vulnerability, Haoze2023, beckert2024towards} & Combines \glspl{llm}’ cognitive abilities with formal rigor for bug detection, invariant generation, and validation & Hybrid systems address scalability and soundness, outperforming standalone methods\\\addlinespace
    
    \gls{esbmc}-\gls{ai}: \gls{llm} + Bounded Model Checking~\citep{Tihanyi2025New, Yiannis2024} & Integrates \glspl{llm} with \gls{esbmc} for automated vulnerability detection and repair in C code & High-accuracy repair of buffer overflows, pointer errors\\\addlinespace

    Symbolic Compression for Interpretability~\citep{Lumen2025} & Formal symbolic compression framework for code generation and logical reasoning tasks & Achieves 78.3\% token compression and improves logical traceability by 62\%\\\addlinespace

    Efficient Self-Attention with Smart Pruning~\citep{Belhaouari2025} & Pruning and matrix folding in transformer layers for sustainable \glspl{llm} & 70\% overall model compression with stable or improved performance\\
    \bottomrule
\end{tabular}
\label{tab:compressed_llm_advancements_chronological}
\end{table}

\section{Theoretical Background}\label{sec:theory}

This section establishes the theoretical foundation for the \gls{llm}-Verifier Convergence Theorem. We first detail the verification principles of the \gls{esbmc} framework, which provides the deterministic foundation for our model (Section~\ref{subsec:formal-verification}). We then characterize the integration of \glspl{llm} as generators of verification artifacts (Sections~\ref{subsec:llms_verification} and \ref{subsec:verifiers-llm-integration}). The main idea of our theory is to explain the refinement process using a model called an absorbing Markov Chain (see Section~\ref{subsec:markovian-properties}). The model builds on a concept called the error-reduction probability ($\delta$). It helps us show that the process will eventually solve, and it allows us to determine that we need about $4/\delta$ steps to do so. Our theory explains the refinement loop using an absorbing Markov Chain (see Section~\ref{subsec:markovian-properties}). This model builds on a concept called the error-reduction probability ($\delta$). It helps us show that the process is likely to improve over time and allows us to determine how many steps we need to take, namely $4/\delta$.

\subsection{Mathematical Foundation}
\label{subsec:math_foundation}

This subsection establishes the mathematical basis for simulating the convergence time, $\mathbb{E}[n]$, by leveraging the properties of the Geometric distribution applied to the Markov Chain's transient states.

\subsubsection{Modeling State Transitions with the Geometric Distribution}

Our methodology models the \gls{llm}-Verifier system as a five-state absorbing Markov Chain with four transient states ($s_1, s_2, s_3, s_4$) and one absorbing state ($s_5$, \graytt{Verified}). The simulation's efficiency relies on characterizing the time spent in each transient state.

\begin{itemize}[topsep=1ex, itemsep=1ex]
    \item \textbf{Residence Time ($M_j$)}: The duration of a single stay in a transient state $j$ is measured in discrete steps, or iterations. The number of iterations required to exit state $j$, denoted $M_j$, is modeled by a Geometric random variable. This distribution counts the number of independent Bernoulli trials (in this case, iterations) required to achieve the first ``success''. We define a success as a transition out of state $j$, and this transition occurs with probability $\delta$ in each iteration, as shown in \eqref{eq_probab_geom}.
    \begin{equation}\label{eq_probab_geom}
        P(M_j = k) = (1-\delta)^{k-1} \delta, \quad k \in \{1, 2, 3, \ldots\}
    \end{equation}
    
    \item \textbf{Expected Single-State Time}: The average number of interactions needed to exit a single state is the expected value of the Geometric distribution, as shown in \equat{eq_geom_distrib_expected}.
    \begin{equation}\label{eq_geom_distrib_expected}
        \mathbb{E}[M_j] = \frac{1}{\delta}
    \end{equation}

\end{itemize}

\subsubsection{The Vectorized Markov Chain and Total Convergence Time}

The total time to reach the absorbing state, $T$, is the sum of the independent residence times spent in the four transient states. This summation is the basis for the computational efficiency of the \gls{vmc} approach.

\begin{itemize}[topsep=1ex, itemsep=1ex]
    \item \textbf{Vectorization Basis}: Since the times $M_1, M_2, M_3, M_4$ are random and do not affect each other, we can find the total time for the $i$-th test, which we call $T_i$, by simply adding these times together, instead of simulating each step one by one., as shown in \equat{eq_vectorized_basis}.
    \begin{equation}\label{eq_vectorized_basis}
        T_i = M_{i,1} + M_{i,2} + M_{i,3} + M_{i,4}
    \end{equation}
    
    \item \textbf{Total Expected Convergence Time ($\mathbb{E}[n]$)}: According to the property of expectation, the average total interaction, denoted as $\mathbb{E}[n]$, is equal to the sum of the average times spent in each of the four stages., as shown in \equat{eq_e_n}.
    \begin{equation}\label{eq_e_n}
        \mathbb{E}[n] = \sum_{j=1}^{4} \mathbb{E}[M_j] = \frac{1}{\delta} + \frac{1}{\delta} + \frac{1}{\delta} + \frac{1}{\delta} = \frac{4}{\delta}
    \end{equation}
\end{itemize}

This equation, $\mathbb{E}[n] \le 4/\delta$, provides the core theoretical bound and is the target for empirical validation using the large-scale \gls{vmc} simulation.

\subsubsection{Markovian Properties of \gls{llm}-Verifier Interactions}
\label{subsec:markovian-properties}

The basis of our convergence theory is the description of how \gls{llm} verifiers interact using an absorbing Markov Chain~\citep{Ermon2014, Craig2002}. In this system, there are specific states known as absorbing states that, once reached, cannot be changed or left.

In our framework, the system reaches the \graytt{Verified} state ($s_5$) as its final absorbing goal. The path to this goal involves traversing a sequence of distinct transient states representing engineering milestones: \graytt{CodeGen} ($s_1$), \graytt{Compilation} ($s_2$), \graytt{InvariantSynth} ($s_3$), and \graytt{SMTSolving} ($s_4$). Recent studies show that \glspl{llm} naturally follow these step-by-step reasoning patterns~\citep{Zekri2024, Jianing2024}, which matches well with Formal Methods that describe ``verification'' as a process moving through defined refinement stages~\citep{Clarke1997, Clarke2018Handbook}.

Several lines of research support this Markovian characterization:

\begin{itemize}[topsep=1ex, itemsep=1ex]
	\item \textbf{\gls{llm} Reasoning as State Transitions}: Chain-of-Thought reasoning in \glspl{llm} naturally follows patterns of moving from one state to another. Each reasoning step depends only on the step before it, which matches the Markov property~\citep{Wei2022, zhou2022least, Zekri2024}.
   
    \item \textbf{Verification Processes as State Machines}: The modeling of verification workflows as state transition systems is well-established in Formal Methods literature~\citep{Clarke1997, Clarke2018Handbook, Woodcock2009}. Our work extends this foundation to incorporate the probabilistic nature of \gls{llm} interactions.
   
    \item \textbf{Probabilistic Program Analysis}: Previous research on probabilistic programming \citep{Staton2016} and probabilistic theorem proving \citep{Gogate2016} offers a strong theoretical foundation for studying systems with random components. These ideas apply directly to \gls{llm}-based refinement using Markov Chain models.
\end{itemize}

In our framework, the Markov property manifests through:
\begin{itemize}
    \item \textbf{State Dependence}: The probability of generating a correct verification artifact depends only on the current verification state and the \gls{llm}'s capability parameter $\delta$.
   
    \item \textbf{Memoryless Transitions}: Each refinement attempt is independent of the specific history of previous attempts, conditioned on the current state.
   
    \item \textbf{Stationary Transition Probabilities}: The error-reduction probability $\delta$ remains constant throughout the refinement process for a given \gls{llm}-verifier pair.
\end{itemize}

\paragraph{Absorbing Markov Chain Theory}
\label{subsec:markov-theory}

Two conditions formally define an absorbing Markov Chain: (i) there exists at least one absorbing state, and (ii) from every state it is possible to reach an absorbing state in finite steps. States that are not absorbing are called transient states~\citep{Ermon2014, Craig2002}.

This formulation extends the classical Kripke structure $M = (s, s_0, R, L)$ used in model checking by introducing probabilistic transitions (see Section~\ref{sec:bmc_transformation}). The two frameworks connect clearly: the set of states $S$ contains both temporary and final states, the initial states $s_0$ represent where verification begins, and the transition relation $R$ covers probabilistic moves between states.

The transition matrix $P$ for a chain with $t$ transient states and $r$ absorbing states has the canonical form, as shown in~\equat{eq:matrixP}, where $Q$ is a $t \times t$ matrix describing transitions between transient states, $R$ is a $t \times r$ matrix describing transitions from transient to absorbing states, and $I_r$ is the $r \times r$ identity matrix.
\begin{equation}\label{eq:matrixP}
    P = \begin{bmatrix}
    Q & R \\
    0 & I_r
    \end{bmatrix}
\end{equation}

The fundamental matrix $N = (I_t - Q)^{-1}$ provides the expected number of visits to transient states before absorption. The expected number of steps until absorption starting from transient state $i$ is given by the $i$-th entry of the vector, as shown in~\equat{eq:matrixtN1}, where $\mathbf{1}$ is a vector of ones.
\begin{equation}\label{eq:matrixtN1}
    \mathbf{t} = N\mathbf{1}
\end{equation}

For a specific chain structure analyzed in this paper, solving for the expected number of steps, $\mathbb{E}[n]$, yields the exact result $\mathbb{E}[n] = {(4 - 3\delta)}/{\delta}$. We use this expression to derive a conservative upper bound for resource estimation, as shown in~\equat{eq:upperbond}.
\begin{equation}\label{eq:upperbond}
    \mathbb{E}[n] = \frac{4 - 3\delta}{\delta} = \frac{4}{\delta} - 3 \leq \frac{4}{\delta}
\end{equation}

The probability of absorption in state $j$ when starting from transient state $i$ is given by the $(i,j)$-entry of the matrix $B = NR$. For large $k$, $P^k$ approximates these absorption probabilities, with $\lim_{k\to\infty} P^k$ revealing the long-term absorption behavior.

We base our convergence analysis of \gls{llm}-verifier refinement loops on this theoretical framework. In it, the \graytt{Verified} state is the final absorbing state, while the refinement states are temporary (transient) states. This framework extends traditional verification by adding probabilistic elements to capture the random behavior of \gls{llm}-based refinement.

\subsection{\Gls{esbmc} Formal Verification Principles}
\label{subsec:formal-verification}

Formal Verification uses math to prove that software works correctly according to given rules, checking all possible ways the software can run. Tools like \gls{esbmc} follow a straightforward process: they first transform the code into simpler forms, then apply \gls{bmc} using techniques such as loop unrolling and \gls{ssa}, and finally turn the properties into verification conditions. \gls{smt} solvers check these conditions to give definite answers or find errors.

These principles underpin our work on reliable \gls{llm}-verifier integration. Figure~\ref{fig:verification-architecture} illustrates the end-to-end verification pipeline of the \gls{esbmc} framework, showing the complete workflow from source code input to final verification results.
\begin{figure}[htbp]
    \centering
    \caption{The \gls{esbmc} verification framework architecture, integrating \gls{bmc} for finite-depth program analysis with \gls{smt} solving for efficient logical reasoning to enable rigorous software verification}
    \includegraphics[width=\linewidth]{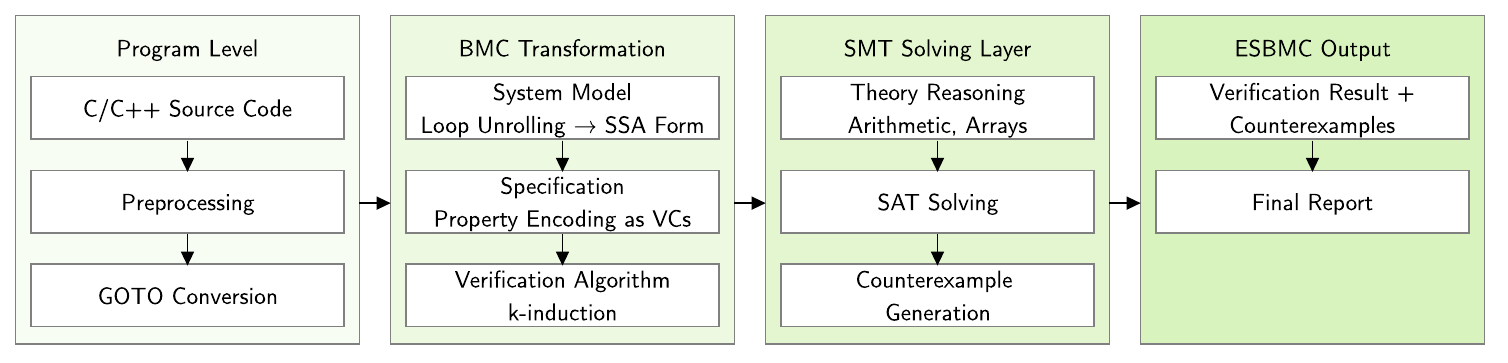}
    \label{fig:verification-architecture}
\end{figure}

\subsubsection{Program Representation Level}
\label{subsec:program-level}

The verification tool begins by transforming the source code into a form it can formally analyze. For C/C\texttt{++} programs, this transformation involves several critical preprocessing steps. Common approaches rely on intermediate representations that simplify complex language features while preserving semantic behavior~\citep{Kroening2016}.

The transformation pipeline starts with C/C\texttt{++} source code that includes complex language features such as preprocessor directives, pointers, dynamic memory allocation, and implementation-defined behavior. The preprocessing phase includes macro expansions and conditional compilation, producing a self-contained translation unit~\citep{Clarke2004}. Next, the process converts the code into the \gls{goto} intermediate representation, which simplifies complex control flow structures into basic blocks connected by explicit jumps~\citep{Bhm1966, Rozier2024}.

The \gls{goto} representation provides the base for the following \gls{bmc} steps. It removes many complex language details while preserving the essential meaning of the original program. This transformation separates language processing from Formal Verification, letting verification focus on the main computational logic.

\subsubsection{Logical Notations for \gls{bmc}}
\label{sec:logical-notations}

Formal Verification employs different logics to specify system properties. Propositional logic provides the basic connectives, while temporal logic introduces operators to reason about how truth values evolve. In these expressions, lowercase letters like \graytt{p} and \graytt{q} denote atomic propositions representing basic state conditions.

Temporal logic divides into two main types: \gls{ltl}, which reasons about linear sequences of states (single execution paths), and \gls{ctl}, which reasons about branching futures (all possible paths from a state). \Gls{bmc} primarily applies \gls{ltl} to formulate properties over finite execution paths.

Table~\ref{tab:logical-notations} summarizes the key operators, with a focus on those most relevant to \gls{bmc}.

\begin{table}[htbp]
    \centering
    \caption{Common Formal Verification notation: propositional logic and temporal logic}
    \label{tab:logical-notations}
    \begin{tabular}{>{\ttfamily}lllp{7.5cm}}
    \toprule
    \multicolumn{1}{l}{\textbf{Symbol}} & \textbf{Name} & \textbf{Logic Type} & \textbf{Meaning / Use Case} \\
    \midrule
    
    \multicolumn{4}{l}{\textbf{Propositional Logic}} \\
    \midrule
    \multicolumn{1}{l}{$\neg$\texttt{p} or $!$\texttt{p}} & Negation / NOT & Propositional & The statement \texttt{p} is false. \\
    \texttt{p} $\land$ \texttt{q} & Conjunction / AND & Propositional & Both \texttt{p} and \texttt{q} are true. \\
    \texttt{p} $\lor$ \texttt{q} & Disjunction / OR & Propositional & At least one of \texttt{p} or \texttt{q} is true. \\
    \texttt{p} $\rightarrow$ \texttt{q} & Implication & Propositional & If \texttt{p} is true, then \texttt{q} must be true. \\
    \texttt{p} $\leftrightarrow$ \texttt{q} & Equivalence / IFF & Propositional & \texttt{p} is true if and only if \texttt{q} is true. \\
    \midrule
    
    \multicolumn{4}{l}{\textbf{Linear Temporal Logic (LTL)}} \\
    \midrule
    $\mathbf{F}$\texttt{p} & Finally & LTL & Eventually, \texttt{p} will become true.\\
    $\mathbf{G}$\texttt{p} & Globally & LTL & \texttt{p} is always true, now and forever.\\
    $\mathbf{X}$\texttt{p} & Next & LTL & In the next state, \texttt{p} will be true. \\
    \texttt{p}$\mathbf{U}$\texttt{q} & Until & LTL & \texttt{p} remains true until \texttt{q} becomes true. \\
    \bottomrule
    \end{tabular}
\end{table}

\subsubsection{\Gls{bmc} Transformation}
\label{sec:bmc_transformation}

Model Checking is a formal, automated verification technique for determining whether a finite-state system model satisfies a given temporal logic specification. The core principle involves three fundamental components, which align with the initial stages of the \gls{bmc} transformation process depicted in the figure~\citep{Clarke1982, Clarke2008}:

\begin{enumerate}[topsep=1ex, itemsep=1ex]
    \item \textbf{System Model ($\textbf{M}$)}: The system under verification is formally modeled, typically as a Kripke structure or a transition system. This model $M$ represents all possible states and transitions that the system can undergo during its execution. Formally, a Kripke structure, illustrated in Figure~\ref{fig:kripkestructure}, is a tuple $M = (s, s_0, R, L)$, where:
    \begin{itemize}[topsep=1ex, itemsep=1ex]
        \item $s = \{s_1, s_2, s_3\}$ is a finite set of states.
        
        \item $s_0 \subseteq S = \{s_1\}$ is a set of initial states.
       
        \item $R \subseteq S \times S = \{(s_1, s_2), (s_2, s_1) (s_1, s_3), (s_3, s_3)\}$ is a transition relation that must be total.
      
        \item $L: S \to 2^{AP} = \{(s_1, \{p, q\}), (s_2, \{p,r\}), (s_3, \{q\})\}$ is a labeling function that assigns to each state a set of atomic propositions from a set $AP$ that are true in that state.
    \end{itemize}

    \begin{figure}[htbp]
        \centering
        \caption{An example of a Kripke structure}
        \includegraphics[width=0.35\linewidth]{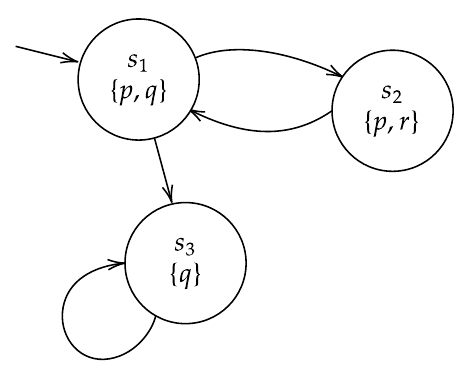}
        \label{fig:kripkestructure}
    \end{figure}
    
    \item \textbf{Specification ($\varphi$)}: The desired properties of the system, such as safety (\graytt{nothing bad happens}) and liveness (\graytt{something good eventually happens}), are expressed as formulas in a temporal logic, such as \gls{ltl}~\citep{Pnueli_1977} or \gls{ctl}~\citep{Emerson_1983}. For instance, the safety property \graytt{the system never enters a deadlock state} can be expressed in \gls{ltl} as $G(\neg \text{deadlock})$.

   \item \textbf{Verification Algorithm}: The algorithm checks whether the model $M$ satisfies the specification $\varphi$, denoted $M \models \varphi$, by verifying that all possible execution paths in $M$ satisfy the temporal logic formula $\varphi$. It then returns:

    \begin{itemize}
        \item \textbf{Yes}, if the property holds for all reachable states.
        
        \item \textbf{No, with a counterexample}, if the property is violated. This counterexample is a trace of states $\{ s_0, s_1, \dots, s_k\} $ leading from an initial state $s_0 \in s_0$ to a state $s_k$ that violates $\varphi$. This debugging aid is one of the most significant advantages of model checking.
    \end{itemize}
\end{enumerate}

The primary challenge in model checking is the state-space explosion problem, where the number of states in $M$ grows exponentially as the number of system components or variables increases. To solve this problem, researchers developed advanced techniques such as \gls{smc}~\citep{Vizel2015} based on \gls{bdd}~\citep{Bryant1986} and \gls{bmc}~\citep{Clarke2001}. The \gls{bmc} method forms the foundation of the \gls{esbmc}~\citep{Menezes2024} platform. This method uses a specific transformation pipeline:

\begin{enumerate}[topsep=1ex, itemsep=1ex]
    \item \textbf{Loop Unrolling and \gls{ssa} Form}: For \gls{bmc}, loops are unrolled to depth $k$ and the program is converted to \gls{ssa} form, assigning unique versions to variables to simplify state representation for logical encoding.
    
    \item \textbf{\Gls{bmc} and \gls{smt} Solving}: To address state-space explosion~\citep{Clarke2018Handbook, Clarke2012, Dhaussy2011}, \gls{bmc} verifies properties within finite bounds~\citep{Biere1999}, generating verification conditions solved via \gls{smt} solvers.

    \item \textbf{Property Encoding as \glspl{vc}}: The \gls{ssa} model and specification $\varphi$ are encoded into a \gls{vc} whose satisfiability implies a property violation within bound $k$~\citep{Mukherjee2016}. An unsatisfiable result indicates correctness within the bound.

    \item \textbf{From \gls{bmc} to Verification: \textit{k}-Induction}: $k$-induction extends \gls{bmc} to complete verification through base case checking (paths up to length $k$) and inductive step (if property holds for $k$ states, it holds for the next)~\citep{Donaldson2011}.    
    
    \begin{enumerate}[topsep=1ex, itemsep=1ex]
        \item \textbf{Base Case}: A standard \gls{bmc} check that verifies the property holds for all execution paths of length up to $k$.
       
        \item \textbf{Inductive Step}: An inductive proof that demonstrates if the property holds for any $k$ consecutive states, it must also hold for the next state.
    \end{enumerate}
\end{enumerate}

Studies have shown that this combined approach effectively balances bug-finding efficiency with verification completeness in tools like \gls{esbmc}~\citep{Gadelha2018ESBMC}.

\subsubsection{\acrfull{smt}}
\label{sec:smt_solving}

The application of \gls{bmc} to software verification requires reasoning about complex data types beyond propositional logic. \Gls{smt} solving extends \gls{sat} by combining boolean satisfiability with decision procedures for background theories~\citep{Barrett2018}. \Gls{smt} solvers serve as the computational engine that enables precise reasoning about program semantics -- including pointer arithmetic, memory operations, and concurrent execution -- by efficiently solving complex verification conditions generated from real-world software systems~\citep{Barreto2011}.

The architecture of modern \gls{smt} solvers comprises several key components working in concert:

\begin{enumerate}[topsep=1ex, itemsep=1ex]
    \item \textbf{\Gls{sat} Solving Core}: At its foundation, an \gls{smt} solver contains a modern \gls{sat} solver that handles the bBoolean structure of formulas. This component finds satisfying assignments to the propositional skeleton of the input formula by replacing theory atoms with boolean variables.

    \item \textbf{Theory Reasoning}: \Gls{smt} solvers integrate decision procedures for multiple background theories essential for software verification -- including bit-vector theory for fixed-width integers and bit-level operations, array theory for memory operations and data structures, linear arithmetic for integer and real constraints, and uninterpreted functions for abstract function symbols -- which work collaboratively to determine satisfiability of formulas involving complex data types~\citep{Brummayer2009, Ho2016, Tabuchi2003}.

    \item \textbf{Integration Architecture}: The \gls{sat} solver and theory solvers interact through a lazy integration scheme where the \gls{sat} solver proposes boolean assignments and theory solvers check their consistency, with theory conflicts driving the search process.

    \item \textbf{Counterexample Generation}: When the \gls{smt} solver finds a satisfiable verification condition, it produces a concrete counterexample -- an execution trace showing the sequence of states leading to the property violation, which is invaluable for debugging.
\end{enumerate}

The integration of \gls{bmc}, $k$-induction, and \gls{smt} solving forms a comprehensive verification framework where \gls{smt} solving provides the mathematical foundation for precise program reasoning.

\subsubsection{\Gls{esbmc} Output and Reporting}
\label{sec:esbmc_output}

The final stage of the \gls{esbmc} workflow produces clear, actionable output for users. This output directly corresponds to the results obtained from the \gls{smt} solving phase and consists of two primary components:

\begin{enumerate}[topsep=1ex, itemsep=1ex]
    \item \textbf{Verification Result + Counterexamples}: \gls{esbmc} provides definitive verification results and, when property violations are detected, generates detailed counterexamples comprising comprehensive execution traces that include the exact sequence of program statements, variable values at each step, thread interleavings for concurrent programs, and memory state information for pointer-related errors, which are invaluable for debugging and root cause analysis.

    \item \textbf{Final Report}: \gls{esbmc} compiles all verification results into a comprehensive final report providing a summary of all checked properties and their status (\graytt{success/failed}), total verification time and resource usage, statistical information, references to counterexamples for failed properties, and verification bounds and parameters, serving as complete documentation for quality assurance, certification, and regression testing.
\end{enumerate}

In summary, \gls{esbmc} implements the theoretical model-checking pipeline, demonstrating how engineers effectively combine \gls{bmc}, $k$-induction, and \gls{smt} solving to build a powerful tool for automated software verification. The framework's comprehensive output, including detailed counterexamples and final reports, provides users with actionable insights into program correctness and potential defects.

\subsection{\glspl{llm} for Verification}
\label{subsec:llms_verification}

\Glspl{llm} based on the Transformer architecture~\citep{Vaswani2017Attention} are trained on massive code and text corpora, enabling pattern recognition capabilities applicable to Formal Verification in \gls{esbmc}. It is crucial to emphasize that \glspl{llm} operate primarily through statistical pattern matching rather than logical reasoning. That program verification is fundamentally undecidable -- no algorithm can solve all verification problems. 

Despite their limitations, \glspl{llm} show two important abilities. First, they can recognize patterns that emerge, such as predicting code behavior and translating between natural language and formal representations~\citep{wei2022emergent}. Second, they can learn in context, adapting to verification tasks using carefully designed prompts without changing their parameters~\citep{brown2020language}.

In code analysis, \glspl{llm} use statistical patterns to generate candidate specifications, such as pre-conditions and loop invariants~\citep{pearce2021asleep, weiss2023thinking}. They also detect possible code problems by recognizing patterns~\citep{allamanis2021survey}. Their generative abilities produce verification artifacts, such as formal properties for \gls{esbmc}~\citep{roziere2023code} and boundary test cases~\citep{lemieux2023codamosa}. This approach builds a complete environment in which \glspl{llm} act as a statistical helper before specification generation and as an assistants after, explaining counterexamples~\citep{zhou2022least}.

For deployment in verification, \glspl{llm} require optimization via model compression and parameter-efficient fine-tuning. Compression includes quantization~\citep{Dettmers2022GPTQ, dettmers2023qlora}, pruning~\citep{Frantar2023SparseGPT}, and knowledge distillation~\citep{Hinton2015Distilling}. \Gls{peft} methods like \gls{lora} enable domain adaptation by injecting small trainable adapters while freezing most parameters~\citep{hu2021lora}, allowing efficient specialization for verification tasks. 

These pattern-matching capabilities, when integrated within our principled convergence framework, enable the development of predictable and reliable \gls{llm}-verifier systems. This integration acknowledges both the statistical nature of \glspl{llm} and the undecidability of verification, while providing theoretical guarantees for the iterative refinement process, making them suitable for deployment in safety-critical verification environments where both performance and predictable behavior are required.

Table~\ref{tab:compressed_llm_advancements} summarizes significant advancements in accelerating Formal Verification through compressed language models, highlighting the key techniques and their impacts on verification efficiency and capability.

\begin{table}[htbp]
\centering
\caption{Major advancements in applying \glspl{llm} to Formal Verification}
\begin{tabular}{p{0.20\textwidth} p{0.40\textwidth} p{0.34\textwidth}}
    \toprule
    \textbf{Advancement and Framework} & \textbf{Description \& Key Features} & \textbf{Impact / Results}\\
    \midrule
    
    Model Compression Techniques for \glspl{llm}~\citep{zhu2023survey, Kim2025, Belhaouari2025, Li2025, Wang2024meets, Dantas2024, Dantas2025} & Quantization, pruning, knowledge distillation for efficient pattern matching in \glspl{llm} & Enable \gls{llm} deployment in resource-limited settings; up to 70\%+ compression with minimal accuracy loss\\\addlinespace
    
    Prompt Compression (LLMLingua)~\citep{Jiang2023}& Coarse-to-fine prompt compression maintaining semantic integrity & Up to 20$\times$ prompt compression with little performance loss, accelerating inference\\\addlinespace
    
    Symbolic Compression for Interpretability~\citep{Lumen2025} & Formal symbolic compression framework for code generation tasks & Achieves 78.3\% token compression and improves traceability by 62\%\\\addlinespace
    
    Automated Proof Generation and Repair (Baldur)~\citep{First2023Baldur} & \glspl{llm} generate and repair proofs for Formal Verification systems & 65.7\% of theorems proved automatically with Baldur+Thor\\\addlinespace
    
    \gls{llm}-Driven Invariant Generation for \gls{bmc}~\citep{Pirzada2024LLM} & \glspl{llm} generate candidate loop invariants using pattern recognition & Improves verification coverage for programs with unbounded loops using \gls{esbmc}\\\addlinespace
    
    Hybrid Verification (\glspl{llm} + Formal Methods)~\citep{Tihanyi2025vulnerability, Haoze2023, beckert2024towards} & Combines \glspl{llm}' pattern recognition with formal rigor for bug detection & Hybrid systems address scalability while maintaining formal guarantees\\\addlinespace
    
    \gls{esbmc}-\gls{ai}: \gls{llm} + Bounded Model Checking~\citep{Tihanyi2025New, Yiannis2024} & Integrates \glspl{llm} with \gls{esbmc} for vulnerability detection and repair & High-accuracy repair of buffer overflows; suitable for \gls{cicd} integration\\\addlinespace
    
    Efficient Self-Attention with Smart Pruning~\citep{Belhaouari2025} & Pruning and matrix folding in transformer layers & 70\% overall model compression with stable performance\\\addlinespace
    
    Safety and Trustworthiness Verification~\citep{Huang2023} & V\&V frameworks for \glspl{llm}, including runtime monitoring & Identifies gaps and proposes methods for \gls{llm} safety assurance\\\addlinespace
    
    Fine-Tuning and Inference Optimization~\citep{Kim2025, Wang2024meets} & Parameter-efficient fine-tuning and inference optimization & Reduces resource overhead for adapting \glspl{llm} to verification tasks\\
    \bottomrule
\end{tabular}
\label{tab:compressed_llm_advancements}
\end{table}

\subsection{Formal Verifiers and \gls{llm} Integration}
\label{subsec:verifiers-llm-integration}

This subsection examines modern verifier characteristics enabling \gls{llm} integration and emerging collaboration patterns. Examples include industrial-strength model checkers like \gls{esbmc} for C/C\texttt{++} programs~\citep{Gadelha2018ESBMC, Monteiro2021, Menezes2024} and theorem provers in environments like \gls{fvel}~\citep{Lin2024FVEL}

Formal verifiers are crucial in the \gls{llm}-assisted verification ecosystem, acting as the ground truth for validating \gls{llm}-generated artifacts~\citep{beckert2024towards, Huang2023}. These automated tools use mathematical reasoning to prove or disprove software correctness with respect to formal specifications~\citep{Woodcock2009, Clarke2018Handbook}, employing techniques such as model checking~\citep{Clarke1997, Biere1999}, theorem proving, and abstract interpretation.

Unlike traditional testing, formal verifiers provide comprehensive coverage and mathematical certainty rather than probabilistic confidence~\citep{Clarke2001}. In \gls{llm} integration, they serve as reliable oracles to validate generated code~\citep{Misu2024Towards, Yiannis2024}, specifications~\citep{Liu2025PropertyGPT}, invariants~\citep{Pirzada2024LLM}, and repairs~\citep{Tihanyi2025New, Fakih2025}, creating a powerful approach between statistical \gls{ai} and formal reasoning~\citep{wang2025supporting, Haoze2023}.

Figure~\ref{fig_verifiers} illustrates the fundamental workflow of a formal verifier, depicting how it consumes a program and its specifications to deliver a definitive, mathematically sound result -- either a proof of correctness or a precise counterexample that guides debugging.
\begin{figure}[htpb]
    \centering
    \caption{The workflow of a formal verifier. This diagram illustrates the operational pipeline of a Formal Verification tool. The process begins with the software program and its formal specification being translated into a formal model. The verifier engine then performs a mathematical analysis of this model}
    \includegraphics[width=0.95\linewidth]{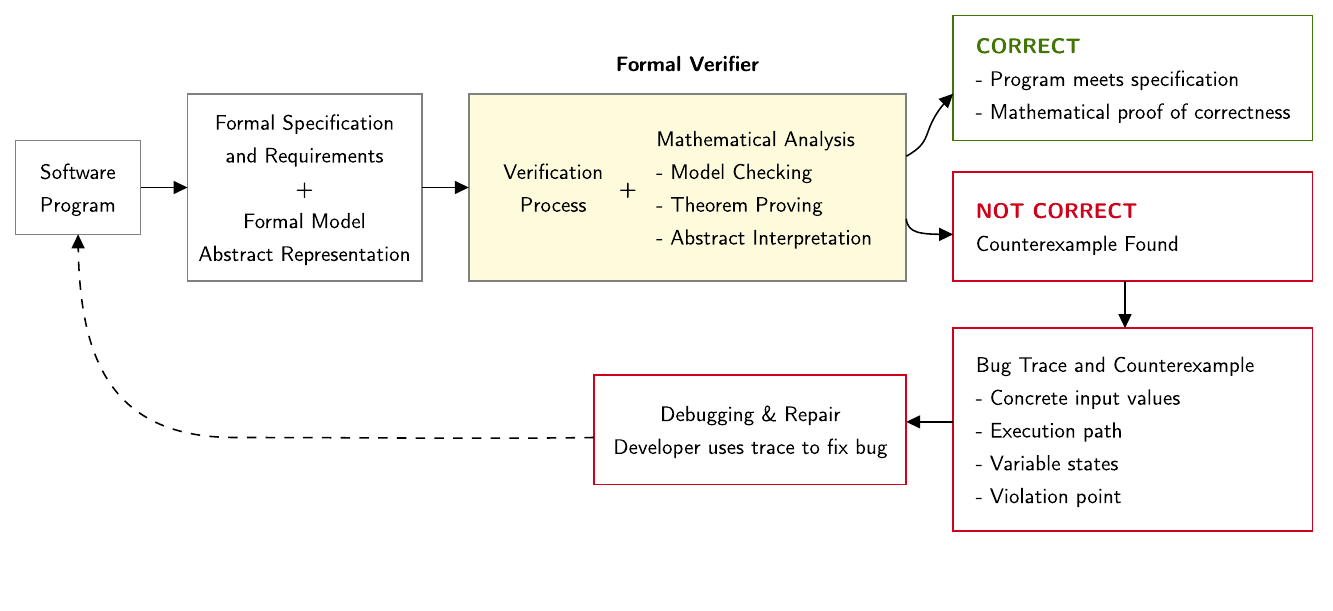}
    \label{fig_verifiers}
\end{figure}

Modern Formal Verification tools like \gls{esbmc} exhibit key characteristics enabling effective \gls{llm} integration~\citep{wang2025supporting, Haoze2023}, including deterministic output providing definitive answers with counterexamples~\citep{Clarke2001} for reliable \gls{llm} refinement~\citep{First2023Baldur, Yiannis2024}. These tools support incremental verification through bounded approaches~\citep{Cordeiro2012, Biere1999} and provide rich counterexamples for \gls{llm} learning~\citep{Gadelha2018ESBMC, Braue2022}, while scalable \gls{smt} integration enables practical verification of complex properties~\citep{Barrett2018, deMoura2008, Menezes2024}.

\gls{llm}-verifier integration uses different design patterns, such as sequential pipelines~\citep{Pirzada2024LLM, Liu2025PropertyGPT}, iterative refinement~\citep{First2023Baldur, Misu2024Towards}, hybrid verification~\citep{Tihanyi2025vulnerability, Haoze2023}, and verifier-as-oracle methods~\citep{Loughridge2024, Tihanyi2023FormAI}. These patterns help with important verification tasks - such as specification generation, invariant synthesis, program repair, and proof assistance - by clearly combining the strengths of statistical learning and Formal Methods.

\gls{esbmc}'s architecture is well-suited for these integration patterns~\citep{Gadelha2018ESBMC, Menezes2024}, featuring modular design for \gls{llm} assistance~\citep{Song2023, Monteiro2021}, comprehensive counterexamples for learning~\citep{Braue2022, Cordeiro2012}, and extensible verification strategies~\citep{Donaldson2011, Gadelha2020} that enable practical application to real-world programs~\citep{Monteiro2021, Barreto2011}.

Despite these architectural advances, \gls{llm}-verifier integration faces challenges including semantic gaps between statistical and formal reasoning~\citep{wei2022emergent}, effective feedback utilization~\citep{First2023Baldur}, scale mismatches~\citep{Clarke2012}, and uncertainty quantification in deterministic frameworks~\citep{Gogate2016}~\citep{beckert2024towards, Huang2023}. To solve these problems, researchers use some good methods. These include structured prompting~\citep{Wei2022, zhou2022least}, checking step-by-step (incremental verification)~\citep{Clarke2001, Donaldson2011}, and improved feedback loops~\citep{First2023Baldur, Gadelha2018ESBMC}, and specialized fine-tuning~\citep{hu2021lora, dettmers2023qlora}. 

This integration represents a fundamental shift toward collaborative systems leveraging statistical \gls{ai} and Formal Methods~\citep{Woodcock2009, Clarke2018Handbook}, providing mathematical guarantees for safety-critical deployment~\citep{Luckcuck2019, Cofer2018}.


\section{Proposed Work - \gls{llm}-Verifier Convergence Theorem}\label{sec:proposed_work}

The main goal of combining \glspl{llm} with Formal Verification tools like \gls{esbmc} is simple: Can we prove with math that repeated improvements will finally lead to a correct answer? To answer this, we build upon the Markov Chain theory (detailed in Section~\ref{subsec:markovian-properties}) to develop a new Theorem tailored for sequential \gls{llm}-verifier systems.

\subsection{Theoretical Foundation and Markov Chain Modeling}

Our convergence theory builds upon the rigorous framework of absorbing Markov Chains established in Section~\ref{subsec:markov-theory}. The classical theory guarantees that a Markov Chain with at least one absorbing state -- and where every state can eventually reach an absorbing state -- will be absorbed almost surely~\citep{Ermon2014,Craig2002}. 

This approach is well-supported theoretically and practically. Since transformer-based \glspl{llm} generate text token-by-token within limited context windows, they inherently display the properties of a Markov process. At the same time, empirical studies of systems like \gls{esbmc}-\gls{ai}~\citep{Tihanyi2025New} and Baldur~\citep{First2023Baldur} confirm that transition probabilities mainly depend on the current verification state.

This model uses a five-state Markov Chain with \graytt{Verified} as the absorbing state ($s_5$). Crucially, we replace generic refinement steps with four distinct transient states representing the essential milestones of a formal verification pipeline:
\begin{enumerate}[topsep=1ex, itemsep=1ex]
    \item $(s_1)$ \graytt{CodeGen}: The \gls{llm} attempts to generate the initial code or specification.
    \item $(s_2)$ \graytt{Compilation}: The artifact is validated for syntax and compilation errors.
    \item $(s_3)$ \graytt{InvariantSynth}: The system attempts to synthesize necessary loop invariants.
    \item $(s_4)$ \graytt{SMTSolving}: The final formal check where the solver attempts to prove correctness.
\end{enumerate}

The logic here mimics a sequential engineering workflow. When the system is in a specific stage $s_i$, it attempts to advance to the next milestone $s_{i+1}$. The transition succeeds with probability $\delta$. If it fails (with probability $1 - \delta$), the system remains in the current loop ($s_i$) to retry the specific task (e.g., regenerate code or refine invariants). The process reaches the absorbing \graytt{Verified} state only after successfully clearing the final stage $s_4$.

\textbf{The Error-Reduction Probability ($\delta$)}:
Central to our model is the error-reduction probability $\delta$, which quantifies the \gls{llm}'s capability to successfully complete a single pipeline stage in one attempt. Table~\ref{tab:llm_performance} shows typical $\delta$ values (0.360 - 0.880) for \gls{llm} verification tasks, validating its role in governing system convergence and providing the empirical foundation for our theory.
\begin{table}[htbp]
    \centering
    \caption{Summary of \gls{llm} performance on code-related tasks}
    \label{tab:llm_performance}
    \begin{tabular}{p{3.2cm}p{2.5cm}cp{6.5cm}}
        \toprule
        \textbf{Task} & \textbf{Model/System} & $\delta$ & \textbf{Details} \\
        \midrule
        Automated Code Repair & \gls{gpt4} & $0.543$~\citep{ease2024} & Single-attempt vulnerability repair on \gls{ease} benchmark (SecurityEval dataset)  \\
        \addlinespace
        Invariant Generation & \glspl{llm} (Inductive Loop Synthesis) & $0.780$~\citep{akhond2025} & First-attempt success rate in generating verified invariants \\
        \addlinespace
        Invariant Generation & CodeLlama-34b-Instruct & $0.360$~\citep{kamath2023} & First-attempt loop invariant synthesis (15 candidates, no Houdini pruning) \\
        \addlinespace
        Proof Synthesis & Baldur & $0.479$~\citep{First2023Baldur} & Single-attempt whole-proof generation on Isabelle/HOL \\
        \addlinespace
        Proof Synthesis & Baldur + Thor & $0.657$~\citep{First2023Baldur} & Single-attempt on Isabelle/HOL when combined with Thor \\
        \addlinespace
        Specification Synthesis & AssertLLM & $0.880$~\citep{Yan2025} & Syntax-valid and \gls{fpv}-verified assertion generation \\
        \addlinespace
        Specification Synthesis & AutoSpec & $0.790$~\citep{Wen2024} &  Generates usable specifications via \gls{llm} + static analysis + validation \\
        \bottomrule
    \end{tabular}
\end{table}

\subsection{The Main Result: \gls{llm}-Verifier Convergence Theorem}\label{sec:main_result}

Combining \glspl{llm} with verification tools, such as \gls{esbmc}, certainly helps automate invariant generation. However, a significant issue remains: most approaches lack theoretical backing. Without this foundation, the system becomes unstable. It may get stuck in infinite loops or oscillate indefinitely~\citep{beckert2024towards, Huang2023}. Our work fills this gap by introducing the \gls{llm}-Verifier Convergence Theorem. This Theorem offers the first formal proof of termination and convergence, which is essential for deployment in safety-critical systems~\citep{Cofer2018, Luckcuck2019}. Next, we present Theorem~\ref{thm:llm_verifier} and its proof.

\begin{theorem}[\gls{llm}-Verifier Convergence Theorem]\label{thm:llm_verifier}

We model the \gls{llm}-verifier process using a discrete-time Markov Chain, denoted as $X = \{X_n\}_{n\geq 0}$. The state space $S$ consists of a sequential engineering pipeline ($S = T \cup A$). First, $T = \{s_1, s_2, s_3, s_4\}$ represents the set of transient pipeline stages (\graytt{CodeGen}, \graytt{Compilation}, \graytt{InvariantSynth}, \graytt{SMTSolving}). Second, $A = \{s_5\}$ acts as the single absorbing state (\graytt{Verified}). Assuming a fixed success probability $\delta \in (0, 1]$ for passing any single stage, we construct the transition matrix $P = (P_{i,j})$. The individual entries \( P_{i,j} = \mathbb{P}(X_{n+1} = s_j \mid X_n = s_i) \) are arranged as follows:

\begin{enumerate}[label=(\roman*)]
    \item \textbf{Transient Pipeline States}: For the transient states $i \in \{1, 2, 3, 4\}$, the transition probabilities are:
    \[
    P_{i,j} = 
    \begin{cases} 
        \delta & \text{if } j = i+1 \quad \text{(success: advance to next stage),} \\
        1-\delta & \text{if } j = i \quad \text{(failure: retry current stage),} \\
        0 & \text{otherwise.}
    \end{cases}
    \]
    \item \textbf{Absorbing State}: State $s_5$ is absorbing, meaning the verification is complete. Hence, \( P_{5,5} = 1 \).
\end{enumerate}

\medskip
Let \( \tau = \inf\{ n \geq 0 : X_n \in A \} \) be the iteration count until verification. The following bounds hold:

\begin{enumerate}
    \item \textbf{Almost Sure Convergence}:
    \[
    \mathbb{P}(\tau < \infty \mid X_0 \in T) = 1
    \]

    \item \textbf{Expected Iteration Bound}:
    The mean time to convergence is given by:
    \[
    \mathbb{E}[\tau \mid X_0 = s_1] = \frac{4}{\delta}
    \]

    \item \textbf{Tail Bound}:
    Consider constants \( \alpha > 0 \) and \( \lambda_Q \in (0, 1) \), where \( \lambda_Q \) is the spectral radius of the transient submatrix \( Q \). Then, for all \( k \geq 0 \):
    \[
    \mathbb{P}(\tau > k \mid X_0 = s_1) \leq \alpha \lambda_Q^k.
    \]
    Substituting \( \lambda_Q = 1 - \delta \) results in the following exponential bound:
    \[
    \mathbb{P}(\tau > k \mid X_0 = s_1) \leq \alpha (1 - \delta)^k.
    \]
\end{enumerate}
\end{theorem}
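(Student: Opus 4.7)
The plan is to exploit the sequential ``birth-chain'' structure of the transition matrix, which reduces the full chain to a sum of four independent waiting-time experiments. Concretely, once the chain enters a transient state $s_i$, the number of iterations $M_i$ it spends there before advancing to $s_{i+1}$ is Geometric with parameter $\delta$ (see \equat{eq_probab_geom}), and by the strong Markov property the residence times $M_1, M_2, M_3, M_4$ are independent. Hence $\tau = M_1 + M_2 + M_3 + M_4$, a Negative Binomial random variable with parameters $r = 4$ and $\delta$. All three claims of the theorem will follow from elementary facts about this decomposition, which is also the precise analytical justification for the vectorized simulation identity captured by \equat{eq_vectorized_basis}.

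Parts (1) and (2) are then routine. For (1), each $M_i$ is almost surely finite because $\mathbb{P}(M_i > k) = (1-\delta)^k \to 0$ for any $\delta \in (0,1]$, and a finite sum of almost surely finite random variables is almost surely finite; equivalently, one may invoke the classical absorption result for Markov chains in which every transient state can reach the absorbing set with positive probability in finitely many steps. For (2), linearity of expectation together with $\mathbb{E}[M_i] = 1/\delta$ (see \equat{eq_geom_distrib_expected}) gives $\mathbb{E}[\tau \mid X_0 = s_1] = \sum_{i=1}^4 \mathbb{E}[M_i] = 4/\delta$ with no further calculation. As an internal consistency check, I would re-derive the same number via the backward recursion $T_i = 1 + (1-\delta)T_i + \delta T_{i+1}$ with terminal condition $T_5 = 0$, which is equivalent to computing $N \mathbf{1}$ for the fundamental matrix $N = (I-Q)^{-1}$ referenced in \equat{eq:matrixtN1}.

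The real work is in (3), the tail bound, and this is the step I would treat as the main obstacle. The most structural route is the matrix identity $\mathbb{P}(\tau > k \mid X_0 = s_1) = \mathbf{e}_1^{\top} Q^{k} \mathbf{1}$, where $Q$ is the $4 \times 4$ upper-triangular transient submatrix whose diagonal entries all equal $1-\delta$. Because $Q$ is triangular, its eigenvalues are precisely those diagonal entries, so the spectral radius is $\lambda_Q = 1-\delta$. One then argues, via Gelfand's formula or equivalently via the existence of an operator norm $\lVert \cdot \rVert_{\epsilon}$ with $\lVert Q \rVert_{\epsilon} \le \lambda_Q + \epsilon$, that $\lVert Q^{k} \rVert \le \alpha \, \lambda_Q^{k}$ for a constant $\alpha$ depending on $\delta$, which in turn dominates the tail probability $\mathbf{e}_1^{\top} Q^{k} \mathbf{1}$.

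The subtlety I would flag here is that $Q - (1-\delta) I$ is nilpotent of order four, so the Jordan form of $Q$ contributes polynomial-in-$k$ prefactors: a direct computation shows that the entries of $Q^{k}$ behave like $k^{3}(1-\delta)^{k}$, and the constant $\alpha$ must absorb this polynomial either by being permitted to depend on $\delta$ and on the chain length $4$, or by relaxing $\lambda_Q$ slightly above $1-\delta$. A cleaner alternative, perhaps easier to present, is to bound the Negative Binomial tail $\mathbb{P}(\tau > k) = \sum_{j=0}^{3} \binom{k}{j} \delta^{j}(1-\delta)^{k-j}$ directly via a Chernoff-type estimate on the number of successes in $k$ Bernoulli$(\delta)$ trials; this yields exponential decay at essentially the same rate and sidesteps the Jordan-block bookkeeping entirely.
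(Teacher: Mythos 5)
Your proposal tracks the paper's own proof essentially step for step: Part~1 via the standard absorption criterion (every transient state reaches $s_5$ with positive probability $\delta^{5-i}$), Part~2 via the decomposition $\tau = M_1+M_2+M_3+M_4$ into independent $\mathrm{Geom}(\delta)$ sojourn times plus linearity of expectation, and Part~3 via the observation that $Q$ is upper bidiagonal with spectral radius $\lambda_Q = 1-\delta$. Your backward-recursion cross-check $T_i = 1 + (1-\delta)T_i + \delta T_{i+1}$, $T_5=0$, indeed returns $T_1 = 4/\delta$ for this chain, consistent with Part~2.

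The subtlety you flag in Part~3 is not a presentational worry but a genuine gap in the paper's argument, which writes $\mathbb{P}(\tau>k)\le\alpha\|Q^k\|\le\alpha(1-\delta)^k$ as if $\|Q^k\|$ were controlled by $\lambda_Q^k$. Since $Q-(1-\delta)I$ is a single nilpotent block of index $4$, one has exactly $\mathbb{P}(\tau>k\mid X_0=s_1)=\mathbf{e}_1^{\top}Q^{k}\mathbf{1}=\sum_{j=0}^{3}\binom{k}{j}\delta^{j}(1-\delta)^{k-j}\sim \tfrac{\delta^{3}}{6(1-\delta)^{3}}\,k^{3}(1-\delta)^{k}$, so the ratio $\mathbb{P}(\tau>k)/(1-\delta)^{k}$ diverges and \emph{no} constant $\alpha$ -- however it is allowed to depend on $\delta$ or on the chain length -- makes the stated inequality hold for all $k$. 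That rules out the first of your two proposed repairs; the second is the right one: replace the rate by any $\lambda\in(1-\delta,1)$ via Gelfand's formula or an adapted operator norm, or equivalently state the bound with a polynomial prefactor $\alpha\,k^{3}(1-\delta)^{k}$, or run your Chernoff estimate on the number of successes in $k$ Bernoulli$(\delta)$ trials. Your Negative Binomial identification is also exactly the distribution the paper appeals to empirically when discussing RQ4, so presenting Part~3 through that lens would bring the theorem statement, the proof, and the experiments into agreement.
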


\begin{proof}
We analyze the absorbing Markov Chain structure to establish the three guarantees.

\medskip

\noindent \textbf{Part 1: Almost Sure Convergence}: Does the system converge? To answer this, we check two requirements from Markov Chain theory. First, do we have an absorbing state? Yes, state $s_5$. Second, can every state reach it? Yes. From any point $s_i \in T$, the probability of reaching the end is $\delta^{5-i} > 0$. Meeting both criteria guarantees almost sure convergence. Thus,
\[
\mathbb{P}(\tau < \infty \mid X_0 \in T) = 1
\]

\medskip

\noindent \textbf{Part 2: Expected Iteration Bound}: We analyze the total iteration count $\tau$ by considering the sequential time spent in each pipeline stage. Let $M_j$ represent the number of steps the process remains in state $s_j$ before advancing. Because the probability of exiting $s_j$ to $s_{j+1}$ on any given step is $\delta$, the sojourn time $M_j$ follows a geometric distribution, $M_j \sim \text{Geom}(\delta)$, with an expected value of $\mathbb{E}[M_j] = 1/\delta$. Since the process is sequential, the total number of iterations is the sum of the time spent in all four stages, $\tau = \sum_{j=1}^4 M_j$. The linearity of expectation immediately gives us:
\[
\mathbb{E}[\tau \mid X_0 = s_1] = \sum_{j=1}^4 \mathbb{E}[M_j] = \frac{1}{\delta} + \frac{1}{\delta} + \frac{1}{\delta} + \frac{1}{\delta} = \frac{4}{\delta}
\]

\medskip

\noindent \textbf{Part 3: Tail Bound}: The transient submatrix $Q$ is upper triangular (not diagonal), as transitions only occur from $i$ to $i$ or $i+1$. Specifically, every diagonal entry is $P_{i,i} = 1-\delta$, and every super-diagonal entry is $P_{i,i+1} = \delta$. For an upper triangular matrix, the eigenvalues are simply the diagonal entries. Thus, the spectral radius is $\lambda_Q = \max_i |P_{i,i}| = 1-\delta$. Using standard Markov Chain theory, we find:
\[
\mathbb{P}(\tau > k \mid X_0 = s_1) \leq \alpha \|Q^k\| \leq \alpha (1 - \delta)^k
\]
Where $\alpha = \|(I-Q)^{-1}\|$. This equation establishes exponential decay.
\end{proof}

\subsection{A Practical Framework for Convergence}

The system's behavior is governed by the error-reduction probability $\delta$, a metric that quantifies the \gls{llm}'s ability to complete a specific verification sub-task. Unlike simpler models that assume a single ``generate-and-verify'' step, our framework acknowledges the engineering reality of a multi-stage pipeline. From any transient pipeline stage $s_i$ (e.g., \graytt{Compilation} or \graytt{InvariantSynth}), the process has a $\delta$ chance of successfully advancing to the next milestone $s_{i+1}$. Conversely, with a probability of $1-\delta$, the specific stage fails, necessitating a retry ($s_i \to s_i$) within the refinement loop before the system can proceed.

Our theory represents an important advancement in bridging \glspl{llm} and Formal Methods. We no longer have to depend on uncertain, black-box outcomes. Instead, we offer a straightforward and dependable framework backed by mathematical evidence:

\begin{itemize}[topsep=1ex, itemsep=1ex]
    \item \textbf{Resource Allocation}: Provides precise guidelines, specifically the bound $\mathbb{E}[n] = {4}/{\delta}$, to assist with setting timeout limits and planning computational budgets for the entire pipeline.

    \item \textbf{Capability Assessment}: Introduces a clear metric ($\delta$) to measure \gls{llm} performance on specific verification tasks, allowing for modular benchmarking of different stages (e.g., measuring $\delta$ for invariant generation separately from code repair).

    \item \textbf{Deployment Guidelines}: Establishes practical thresholds for use ($\delta > 0.3$ for standard systems and $\delta > 0.6$ for high-performance applications).
\end{itemize}

Mathematically, we confirm that for any $\delta > 0$, the system will reach the verified state almost surely. We can then use the derived formula $\mathbb{E}[n] = {4}/{\delta}$ to plan system limits, converting a theoretical measure into a usable engineering metric. Table~\ref{tab:iteration-bounds} shows how to use this logic to distribute resources effectively in various situations.

\begin{table}[htbp]
    \centering
    \caption{Expected total pipeline iterations for different \gls{llm} verification capabilities}
    \label{tab:iteration-bounds}
    \begin{tabular}{lcc}
        \toprule
        \textbf{\gls{llm} Capability Level} & \textbf{Success Rate $\delta$} & \textbf{Expected Total Iterations $\mathbb{E}[n]$} \\
        \midrule
        Moderate Capability & 0.3 & 13.3 \\
        Balanced Performance & 0.5 & 8.0 \\
        High Performance & 0.8 & 5.0 \\
        \bottomrule
    \end{tabular}
\end{table}

Furthermore, the exponential tail bound $\mathbb{P}(n > k) \leq \alpha(1 - \delta)^k$ ensures that the probability of excessively long refinement loops decays exponentially. This enables engineers to configure reliable timeouts for safety-critical systems, ensuring that the verification pipeline fails gracefully rather than hanging indefinitely.


\section{Methodology}
\label{sec:methodology}
Our methodology establishes a comprehensive framework for validating the convergence theory through systematic simulation and analysis. We designed this approach to test the theoretical bounds rigorously; it simultaneously characterizes the behavior of sequential \gls{llm}-verifier systems across their full operational range.

\subsection{Simulation Framework Design}
\label{subsec:simulation-framework}
We developed a specialized simulation framework that implements the sequential Markov Chain model described in Section~\ref{sec:proposed_work}. The simulator implements probabilistic transitions through the four distinct pipeline stages: \graytt{CodeGen} ($s_1$), \graytt{Compilation} ($s_2$), \graytt{InvariantSynth} ($s_3$), and \graytt{SMTSolving} ($s_4$), leading ultimately to \graytt{Verified} ($s_5$). In this model, $\delta$ governs the success probability of advancing from stage $s_i$ to $s_{i+1}$.

We examine the success probability parameter $\delta$ across its entire range. We categorize the resulting operational data into three key regions: marginal ($0.1 \leq \delta < 0.3$), practical ($0.3 \leq \delta \leq 0.6$), and high-performance ($0.6 < \delta \leq 0.9$). After generating the large-scale dataset, we proceed to the data collection phase. In this stage, we record both the total convergence time ($T$) and the sequence traces. The final stage involves the experimental validation framework, which uses a three-pronged analytical approach -- theoretical, sensitivity, and statistical -- to rigorously test the derived bound $\mathbb{E}[n] \le 4/\delta$ and to provide empirically validated engineering insights.

We summarize the complete experimental workflow in Figure~\ref{fig:methodology_overview}. The process begins with the simulation core, which translates the sequential pipeline model (Section~\ref{sec:proposed_work}) into a highly efficient \gls{vmc} simulation (Section~\ref{subsec:simulation-execution}).

\begin{figure}[htbp]
    \centering
    \caption{Overview of the \gls{llm}-Verifier Convergence validation methodology. The process links the theoretical sequential Markov Chain model to the experimental framework. The simulation core uses a \gls{vmc} approach with the Geometric distribution ($\text{Geom}(\delta)$) to efficiently generate $N=10,000$ trials across the three defined operational regions. After data collection (generating convergence time $T$ and computational metrics), the results are channeled into the experimental validation framework, which employs three complementary analytical lenses to confirm theoretical alignment, assess framework robustness, and ensure statistical reliability (see Table~\ref{tab:validation-framework})}
     \includegraphics[width=\textwidth]{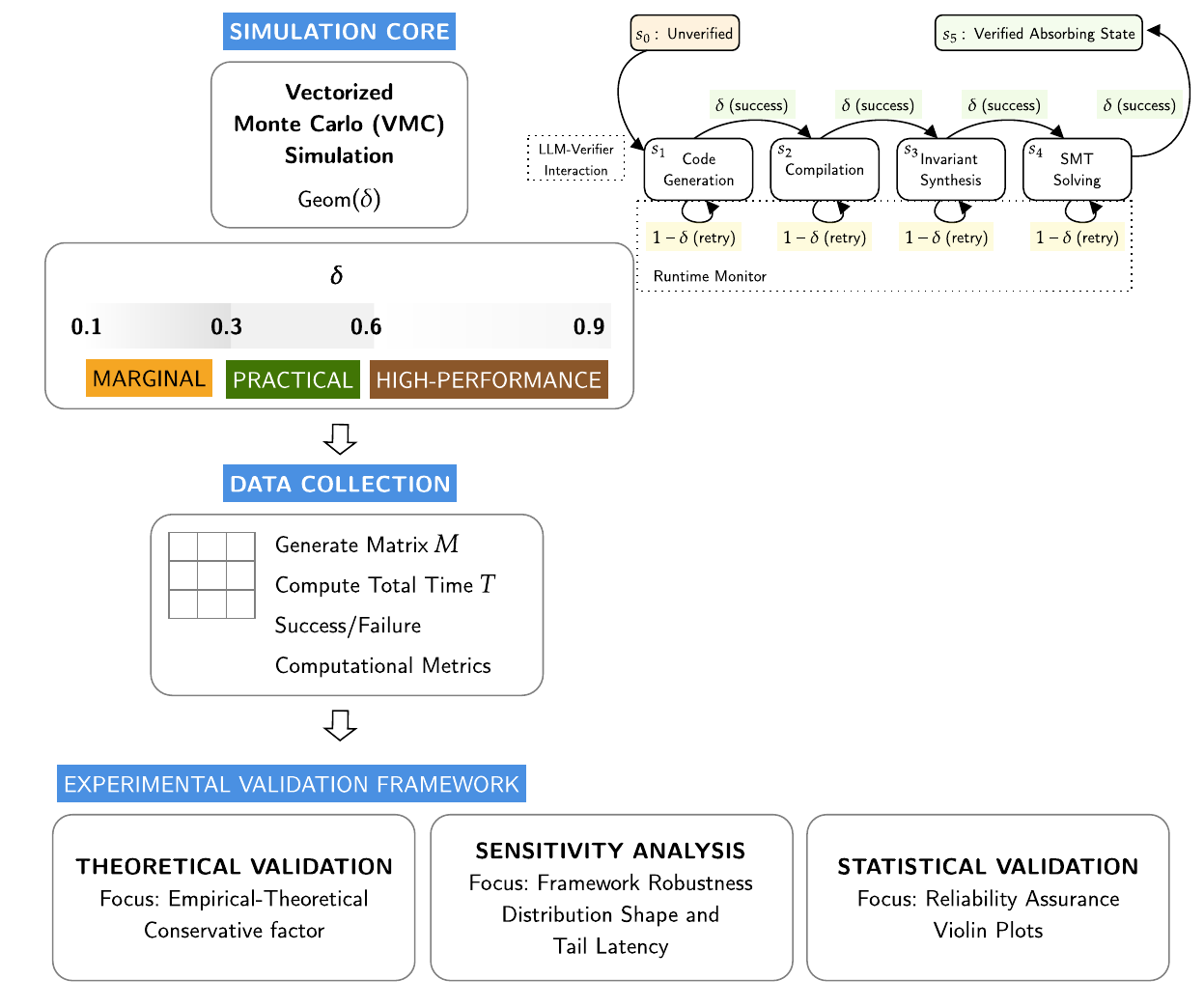}
    \label{fig:methodology_overview}
\end{figure}

We use a structured method to carefully check the accuracy of our theory about how well different verification methods for \glspl{llm} work. We test this across all available verification options to make sure our findings are reliable.
\begin{itemize}[topsep=1ex, itemsep=1ex]
    \item \textbf{Sample Size Justification}: The selection of 10,000 trials for each $\delta$ value is a robust methodological decision, ensuring high statistical power~\citep{Cohen2013,Robert2010} and narrow 99\% binomial confidence intervals (width $\le \pm 0.01$)~\citep{Agresti1998,Brown2001}. This large sample size is also critical for accurately estimating tail probabilities~\citep{Boucheron2013, Papaspiliopoulos2020} and for precisely computing higher-order empirical moments, such as variance and skewness~\citep{Wasserstein1997}.
    
    \item \textbf{Operational Regions Definition}: The parameter space $\delta \in [0.1, 0.9]$ is divided into three operational regions based on probabilistic analysis and empirical modeling.
 
    \begin{enumerate}[topsep=1ex, itemsep=1ex]
    
        \item The \textbf{marginal region} $(0.1 \leq \delta < 0.3)$ addresses small success probabilities where convergence is theoretically ensured by exponential tail bounds, $\mathbb{P}(n > k) \le e^{-c\delta k}$~\citep{Boucheron2013, Papaspiliopoulos2020}, but tests practical computational limits. 
        
        \item The \textbf{practical region} where \(\delta\) falls between 0.3 and 0.6 matches typical success rates seen in Bernoulli-type outcomes~\citep{Durrett2019}. This range also demonstrates the capabilities of \glspl{llm} in verification tasks~\citep {jiang2023codellama, First2023Baldur}. Therefore, it is the most important range when using these models in real-world situations.
        
        \item Finally, the \textbf{high-performance region} $(0.6 <\delta \leq 0.9)$ represents near-expert behavior, corresponding to the high-probability convergence regime in Monte Carlo and reliability theory~\citep{Robert2010, Wasserstein1997}.
    \end{enumerate}
 
    \item \textbf{Experimental Scale}: Is structured around 90,000 Monte Carlo trials (10,000 trials across nine $\delta$ values from 0.1 to 0.9) to ensure statistically rigorous coverage for convergence analysis~\citep{Wasserstein1997, Robert2010}. This method of systematic sampling helps us explore the area of success probabilities in a consistent way~\citep{Durrett2019}. It allows us to understand how changes in $\delta$ relate to the rate of convergence and makes it easier to compare different situations~\citep{Montenegro2006}. The scale we use helps us test the rule that says the average value of $n$ is less than or equal to 4 divided by $\delta$. It also helps us see sudden changes, which we call phase transitions~\citep{Durrett2019}. Overall, this creates a unique dataset that will be useful for future studies.
\end{itemize}

\subsection{Simulation Execution}
The validation follows a systematic protocol:

\begin{enumerate}[topsep=1ex, itemsep=1ex]
    \item \textbf{\gls{vmc} Simulation}: Our framework works differently than traditional simulations that run step-by-step. Instead, it uses a method called \gls{vmc} to speed up and improve efficiency. With \gls{vmc}, we can calculate the time for all 10,000 trials simultaneously, rather than one at a time. This approach treats groups of data as a single unit, which speeds up calculations.
    
    \item \textbf{Geometric Distribution}: The Geometric distribution, $\text{Geom}(\delta)$, models the residence time spent in each of the four pipeline stages ($s_1$ through $s_4$). Its expected value, $1/\delta$, directly establishes the theoretical basis for the primary bound we are validating. Since the process is sequential, the total expected time is the sum of the means: $\mathbb{E}[n] = \sum_{j=1}^{4} \mathbb{E}[M_{j}] = 4/\delta$.
    
    \item \textbf{Parameter Sweep}: For each $\delta$ value in $\{0.1, 0.2, \ldots, 0.9\}$, execute 10,000 independent trials.
    
    \begin{itemize}[topsep=1ex, itemsep=1ex]
        \item \textbf{Number of iterations until absorption}: This acts as the main measurement used to check the important theory that states $\mathbb{E}[n] \leq 4/\delta$. It directly shows how quickly things come together or reach a stable point.
        
        \item \textbf{Sequence of state transitions}: Enables analysis of the convergence path, identifying bottlenecks in specific pipeline stages (e.g., verifying if Invariant Synthesis $s_3$ consumes disproportionate resources).
        
        \item \textbf{Convergence} \graytt{success}/\graytt{failure}: Provides binary outcomes to empirically verify the Theorem's guarantee of almost sure convergence across all trials.
        
        \item \textbf{Computational metrics (execution time, memory usage)}: Look at how doable the plan is and how many resources it will need. Make sure to connect the ideas from theory with the actual challenges we face when putting them into practice.
    \end{itemize}

    \item \textbf{Statistical Configuration}: The framework features seeded random number generation for reproducible outcomes, independent trial execution to prevent cross-contamination, and comprehensive logging of all experimental parameters and outcomes.
\end{enumerate}

\subsubsection{Experimental Validation Framework}
\label{subsubsec:experimental-validation}
Our experimental study combines multiple methods, numerical measures, and key visuals to evaluate convergence theory from multiple angles carefully. The design of our experiment includes three methods to confirm our findings, each with its own measures and visuals to help interpret the results, as shown in Table~\ref{tab:validation-framework}.

\begin{table}[htbp]
    \centering
    \caption{Experimental validation framework: approaches, metrics, and visualizations}
    \label{tab:validation-framework}
    \begin{tabular}{p{1.6cm}p{4cm}p{4.5cm}p{3.5cm}}
        \toprule
        \textbf{Validation Approach} & \textbf{Methodological Focus} & \textbf{Metrics} & \textbf{Visualization} \\
        \midrule
        \textbf{Theoretical Validation} & 
        Empirical-theoretical alignment through statistical tests of distribution structure, direct bound comparison ($\mathbb{E}[n] \le 4/\delta$), and success rate verification across all $\delta$ settings. & 
        \textbf{Conservative Factor ($C_f$)}: Ratio of theoretical bound to empirical mean ($C_f = {(4/\delta)}/{\mu}$). \textbf{Efficiency ($\eta$)}: Successful state transitions per iteration ($\eta = 4/\mu$). \textbf{Success Rate}: Proportion of trials satisfying $n \leq 1000$. & 
        \textbf{Theoretical vs. Empirical Bounds} plot comparing $\mathbb{E}[n] \leq 4/\delta$ against empirical means (Figure~\ref{graph1_alignment}) \\
        \addlinespace
        \textbf{Sensitivity Analysis} & 
        Framework robustness assessment via systematic $\delta$ variation, boundary-condition testing ($\delta = 0.1, 0.9$), and distribution characteristic analysis. & 
        \textbf{Distribution Shape}: Variance ($\sigma^2$), Skewness ($\gamma_1$), and Kurtosis ($\kappa$). \textbf{Spread}: \gls{iqr} ($P_{75}-P_{25}$). \textbf{Tail Latency ($P_{99}$)}: The 99th percentile of iterations to absorption. & 
        \textbf{Operational Regions Map} demarcating marginal ($\delta < 0.3$), practical ($0.3 \leq \delta \leq 0.6$), and high-performance ($\delta > 0.6$) regions (Figure~\ref{graph2_regions})\\
        \addlinespace
        \textbf{Statistical Validation} & 
        Reliability assurance through confidence interval calculation, hypothesis testing against theoretical predictions, and statistical power analysis. & 
        \textbf{Precision ($W_{99}$)}: Width of the 99\% Confidence Interval ($2.576 \cdot \sigma / \sqrt{N}$). \textbf{Throughput}: Simulation speed (trials/sec). \textbf{Resource Cost}: Peak memory allocation (MB). & 
        \textbf{Tail Probability Analysis} with bounds $e^{-c\delta k}$ (Figure~\ref{graph3_tail_prob}). \textbf{Iteration Distribution Characterization} violin plots (Figure~\ref{graph4_violin})\\
        \bottomrule
    \end{tabular}
\end{table}

This combined approach turns theoretical ideas into real-world engineering knowledge. It helps ensure the safe use of language model verifiers by providing different ways to analyze and understand them.

\subsubsection{Simulation Execution Engine}
\label{subsec:simulation-execution}
Unlike iterative state-machine simulations, our framework utilizes a vectorized Monte Carlo approach to maximize computational throughput and statistical power.
\begin{enumerate}[topsep=1ex, itemsep=1ex]
    \item \textbf{Vectorized State Generation}: For each configuration $\delta \in \{0.1, 0.2, \ldots, 0.9\}$, the system instantiates a matrix $\mathbf{M} \in \mathbb{Z}^{N \times 4}$ where $N=10,000$ (the trial count). Each element $\mathbf{M}_{i,j}$ represents the residence time in stage $j$ for trial $i$, sampled independently from a Geometric distribution, as defined in \equat{eq:geometric_dist}.
    \begin{equation}\label{eq:geometric_dist}
        \mathbf{M}_{i,j} \sim \text{Geom}(\delta) \quad \forall i \in [1, N], j \in [1, 4]
    \end{equation}

    \item \textbf{Iteration Aggregation}: The total convergence time $T_i$ for the $i$-th trial is computed via row-wise summation, $T_i = \sum_{j=1}^{4} \mathbf{M}_{i,j}$, eliminating loop overhead and ensuring memory contiguity.

    \item \textbf{Resource Monitoring}: To accurately capture computational metrics without observer effect, the simulation encapsulates the generation phase in a \graytt{tracemalloc} context, which records peak memory allocation (MB) and process time (\gls{cpu} seconds) for each $\delta$ batch.
\end{enumerate}

\subsubsection{Data Collection and Filtering}
\label{subsec:data-collection}
Our processing pipeline first extracts trial-level granularity from the raw data of each simulation batch before performing aggregation:
\begin{itemize}[topsep=1ex, itemsep=1ex]
    \item \textbf{Convergence Status}: A trial is called a \graytt{Success} if it takes 1,000 units of time or less. This limit helps prevent the trial from running forever when the value is very low (less than 0.1). It is also much higher than what we usually expect, which is about 40 units of time when the value is 0.1.

    \item \textbf{Sequence Trace}: Saving the complete list $[\mathbf{M}_{i,1}, \mathbf{M}_{i,2}, \mathbf{M}_{i,3}, \mathbf{M}_{i,4}]$ for each trial enables us to identify specific patterns where progress slows. For example, we can identify trials that clear the initial CodeGen and Compilation stages quickly but stall during the final SMT Solving stage.
\end{itemize}

\subsubsection{Metric Computation Methodology}
\label{subsec:metric-computation}
The quantitative metrics presented in Table~\ref{tab:validation-framework} (p.~\pageref{tab:validation-framework}) are derived from the raw simulation data $T$ using the following formulations:
\begin{enumerate}[topsep=1ex, itemsep=1ex]
    \item \textbf{Conservative Factor ($C_f$)}: Calculated as the ratio between the theoretical upper bound and the empirical mean $\mu$, according to \equat{eq:cf}.
    \begin{equation}\label{eq:cf}
        C_f = \frac{4/\delta}{\mu}
    \end{equation}
Values of $C_f \geq 1.0$ indicate that the theory successfully provides a safe upper bound.

    \item \textbf{Statistical Significance (CI Width)}: We compute the width of the 99\% confidence interval using the standard error of the mean, according to \equat{eq:ci_width}.
    \begin{equation}\label{eq:ci_width}
        W_{99} = 2.576 \times \frac{\sigma}{\sqrt{N}}
    \end{equation}
Where $\sigma$ is the empirical standard deviation and $N=10,000$.

    \item \textbf{Iteration Efficiency ($\eta$)}: Defined as the ratio of successful state transitions to total iterations, according to \equat{eq:eta}.
    \begin{equation}\label{eq:eta}
        \eta = 4 / \mu
    \end{equation}
\end{enumerate}

\subsubsection{Visualization Logic}
\label{subsec:viz-logic}
We created the visualizations using multiple data transformations to illustrate key concepts clearly:
\begin{itemize}[topsep=1ex, itemsep=1ex]
    \item \textbf{Theoretical Alignment (Figure~\ref{graph1_alignment})}: To visualize distribution spread beyond simple error bars, we employed \gls{iqr} shading and a smooth cubic spline through the empirical means. The shaded region (between the 25th percentile, $P_{25}$, and 75th percentile, $P_{75}$) provides a robust visual indicator of where the central 50\% of trials converge.

    \item \textbf{Operational Regions (Figure~\ref{graph2_regions})}: Our analysis employed a dual-axis transformation to study the phase transition. This method creates a graph that shows variance (a measure of how much things differ) on a logarithmic scale on the left and iteration efficiency (how well something works) on a linear scale on the right. Using these dual axes helps us see the changes across three areas: the marginal area (where values are less than 0.3), the practical area, and the high-performance area.
    
    \item \textbf{Tail Probability (Figure~\ref{graph3_tail_prob})}: We compute the tail distribution by using the complementary cumulative distribution function. We plot this relationship as $P(n > k) = 1 - \text{rank}(k)/N$ on a log-linear scale. We display only the important points from our data, which show that the tails of the convergence time distribution follow an exponential pattern.
    
    \item \textbf{Iteration Distribution Characterization (Figure~\ref{graph4_violin})}: We use violin plots to show the whole shape and spread of the time it takes to reach a goal for each value of $\delta$. When we plot the number of tries on a logarithmic scale, it makes it easier to see the vast differences in times in the lower-performing area compared to the more grouped times in the high-performing area.
\end{itemize}

\section{Experiments and Results}
\label{sec:experiments}

This part shows the testing we did to check our theory about how quickly things get better. We validate our method's correctness and robustness across diverse \gls{llm} applications. The implementation of this work is  available at \url{https://github.com/pierredantas}

We followed the detailed steps explained in Section~\ref{sec:methodology} to run our experiment. We used a fast Python program that works well with NumPy, which helps with calculations. In our study, we tested a range of values for $\delta$, starting at $0.1$ and increasing by $0.1$ up to $0.9$. For each $\delta$, we ran $10,000$ separate tests, which means we did over $90,000$ tests in total to ensure our results were reliable.

We designed our experiments to address four key research questions:

\begin{enumerate}[topsep=1ex, itemsep=1ex]
    \item \textbf{RQ1: Convergence Reliability}: Does the empirical convergence behavior match theoretical almost-sure convergence guarantees across the $\delta$ spectrum?

    \item\textbf{RQ2: Bound Accuracy}: How strict are the theoretical limits, expressed as $\mathbb{E}[n] \leq {4}/{\delta}$, when compared to actual results?

    \item\textbf{RQ3: Practical Operating Regions}: Can we identify distinct operational regions with clear design implications for real-world deployment?
    
    \item\textbf{RQ4 Statistical Distribution Fit}: Does the empirical distribution of convergence time conform to the assumed geometric (exponential tail) properties, and can the tail behavior be reliably characterized by a theoretical exponential bound?
    
    \item\textbf{RQ5 Distribution Characterization and Predictability}: How does the overall pattern of convergence time change in different operational areas? Can we identify the high-performance area by spotting a consistent and low-variance distribution?
    
    \item\textbf{RQ6 (Computational Performance)}: What is the performance of the vectorized simulation framework, and how many trials can it run each second? Does the amount of memory it uses show that it is effective for large-scale Monte Carlo validation?
\end{enumerate}

\subsection{Experimental Results}
\label{subsec:experimental-results}

\subsubsection{RQ1: Convergence Reliability}

Our experiments show that the results are very consistent across all tested values of $\delta$. The findings back up the theory that we can expect strong, almost inevitable convergence. In Table~\ref{tab:convergence-results}, you can see the detailed metrics related to convergence for different $\delta$ values.

\begin{table}[htbp]
    \centering
    \caption{Empirical convergence results vs. theoretical bounds (10,000 trials per $\delta$)}
    \label{tab:convergence-results}
    \begin{tabular}{l *{6}{S[table-format=3.3]}}
    \hline
    \textbf{$\delta$} & \textbf{Theory} & \textbf{Mean ($\mu$)} & \textbf{Std ($\sigma$)} & \textbf{Cons. Fact ($C_f$)} & \textbf{Tail Latcy ($P_{99}$)} & {\textbf{Success Rate (\%)}} \\
    \hline
    0.1 & 40.000 & 39.876 & 18.836 & 1.0031 & 97.00 & 100.00 \\
    0.2 & 20.000 & 19.914 & 8.855 & 1.0043 & 47.00 & 100.00 \\
    0.3 & 13.333 & 13.316 & 5.513 & 1.0013 & 30.00 & 100.00 \\
    0.4 & 10.000 & 10.058 & 3.933 & 0.9942 & 22.00 & 100.00 \\
    0.5 & 8.000 & 8.012 & 2.823 & 0.9985 & 17.00 & 100.00 \\
    0.6 & 6.667 & 6.678 & 2.094 & 0.9982 & 13.00 & 100.00 \\
    0.7 & 5.714 & 5.723 & 1.585 & 0.9984 & 11.00 & 100.00 \\
    0.8 & 5.000 & 4.981 & 1.112 & 1.0039 & 9.00 & 100.00 \\
    0.9 & 4.444 & 4.443 & 0.705 & 1.0001 & 7.00 & 100.00 \\
    \hline
    \end{tabular}
\end{table}

The perfect success rate across all values tested for $ \delta $ suggests that the process will usually achieve the expected result. The data show that the average result ($\mu$) closely matches the expected value ($\mathbb{E}[n] = 4/\delta$). The conservative factor ($C_f$) is approximately $1.0$ across all configurations, confirming that the empirical convergence rate closely matches the theoretical prediction. The tail latency, shown as the 99th percentile of trials, indicates that even the worst cases have limits. The maximum number of steps reached is 97 when the value is 0.1, and it decreases to 7 steps when the value is 0.9.

\subsubsection{RQ2: Bound Accuracy}
The results we found show a strong connection between the expected limit on the number of steps, represented as $\mathbb{E}[n] \leq {4}/{\delta}$, and what we actually observed in our experiments. The conservative factor ($C_f$), which helps us see this difference, is about $1.0$ across all values of $\delta$. This small range indicates that our theoretical model is very accurate at predicting how quickly things will reach their average, with only a little extra caution needed ($C_f$ stays between $0.9942$ and $1.0043$).

The analysis shows that there is a complex relationship between $\delta$ and the predictability of outcomes. The empirical standard deviation ($\sigma$) shows a significant decrease, dropping from approximately 18.84 at $\delta=0.1$ to 0.71 at $\delta=0.9$. This sharp decline in variability makes the results more predictable as the parameter $\delta$ increases.

The image labeled as Figure~\ref{graph1_alignment} clearly shows how closely the average curve follows the theoretical curve. Additionally, the shaded area representing the interquartile range (IQR) highlights a large amount of initial variation (about 18.84) in the marginal area where the value of delta is less than 0.3.

\begin{figure}[htbp]
    \centering
    \caption{\textbf{Theoretical Bound Alignment and Empirical Convergence Rate}. The figure compares the theoretical expected bound ($\mathbb{E}[n] \le 4/\delta$) against the empirical mean ($\mu$) across the $\delta$ spectrum. The extremely close tracking between the empirical curve (blue) and the theoretical curve (red dashed) demonstrates the tight alignment ($C_f \approx 1.0$) of the model prediction. The blue shaded area represents the \gls{iqr} ($P_{25}-P_{75}$), which visually captures the dramatic decrease in system variance ($\sigma^2$) when transitioning from the marginal region ($\delta < 0.3$) to the high-performance region ($\delta > 0.6$)}    \includegraphics[width=0.75\textwidth]{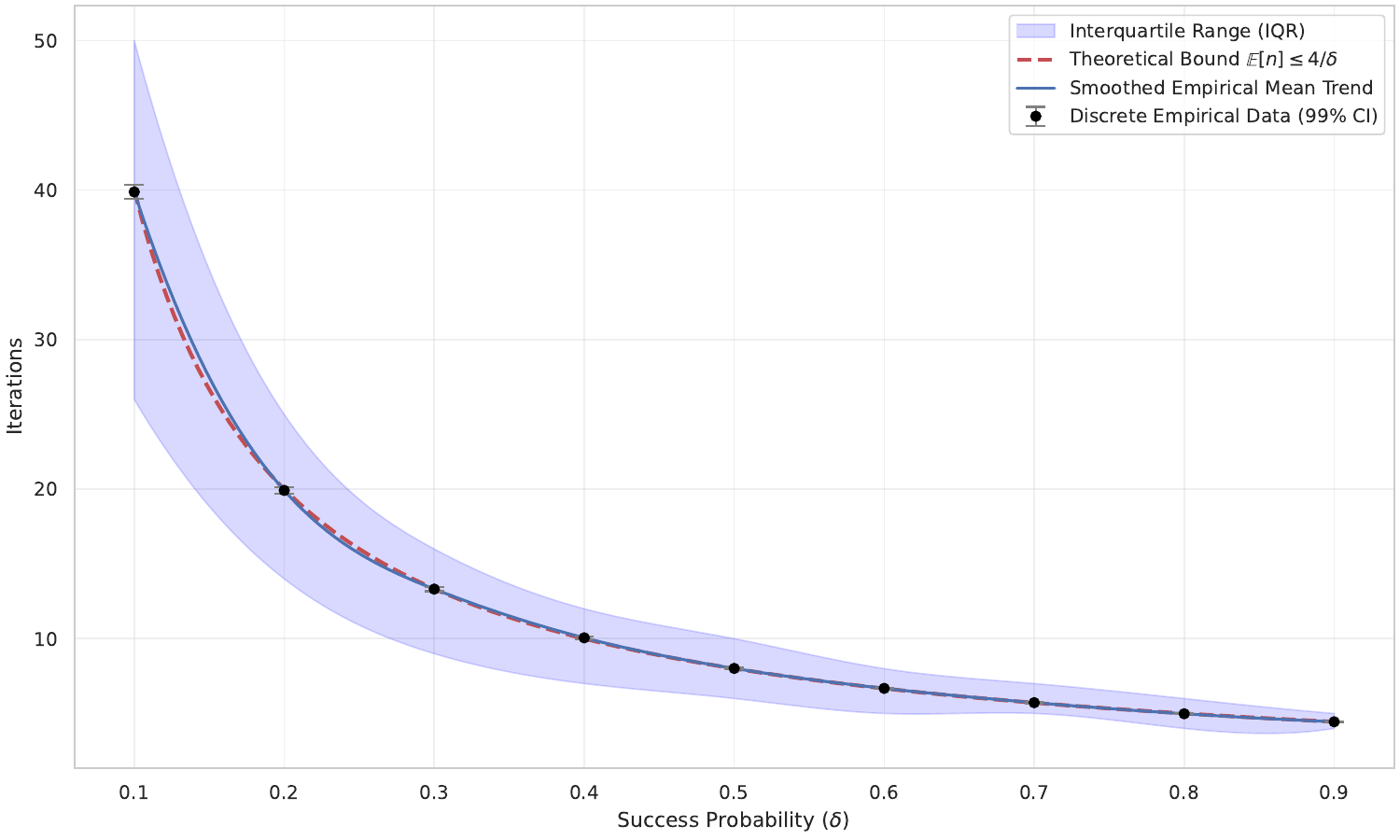}
    \label{graph1_alignment}
\end{figure}

\subsubsection{RQ3: Practical Operating Regions}

Based on the empirical convergence patterns, we identify three distinct operational regions with clear design implications, as visualized in Figure~\ref{graph2_regions}, which explicitly show the relationships among $\delta$, convergence predictability (variance on a log scale), and iteration efficiency.

\begin{figure}[htbp]
    \centering
    \caption{\textbf{Operational Regions Map: Performance and Stability Analysis}. This dual-axis visualization maps the framework's behavior across the $\delta$ spectrum. The left axis (purple, log scale) tracks the empirical variance ($\sigma^2$), demonstrating the system's predictability. The right axis (green, linear scale) tracks iteration efficiency ($\eta$). The plot clearly illustrates the sharp phase transition in stability: variance collapses rapidly upon entering the practical region ($\delta \ge 0.3$), confirming the empirical boundaries for safe and efficient deployment}    \includegraphics[width=0.75\textwidth]{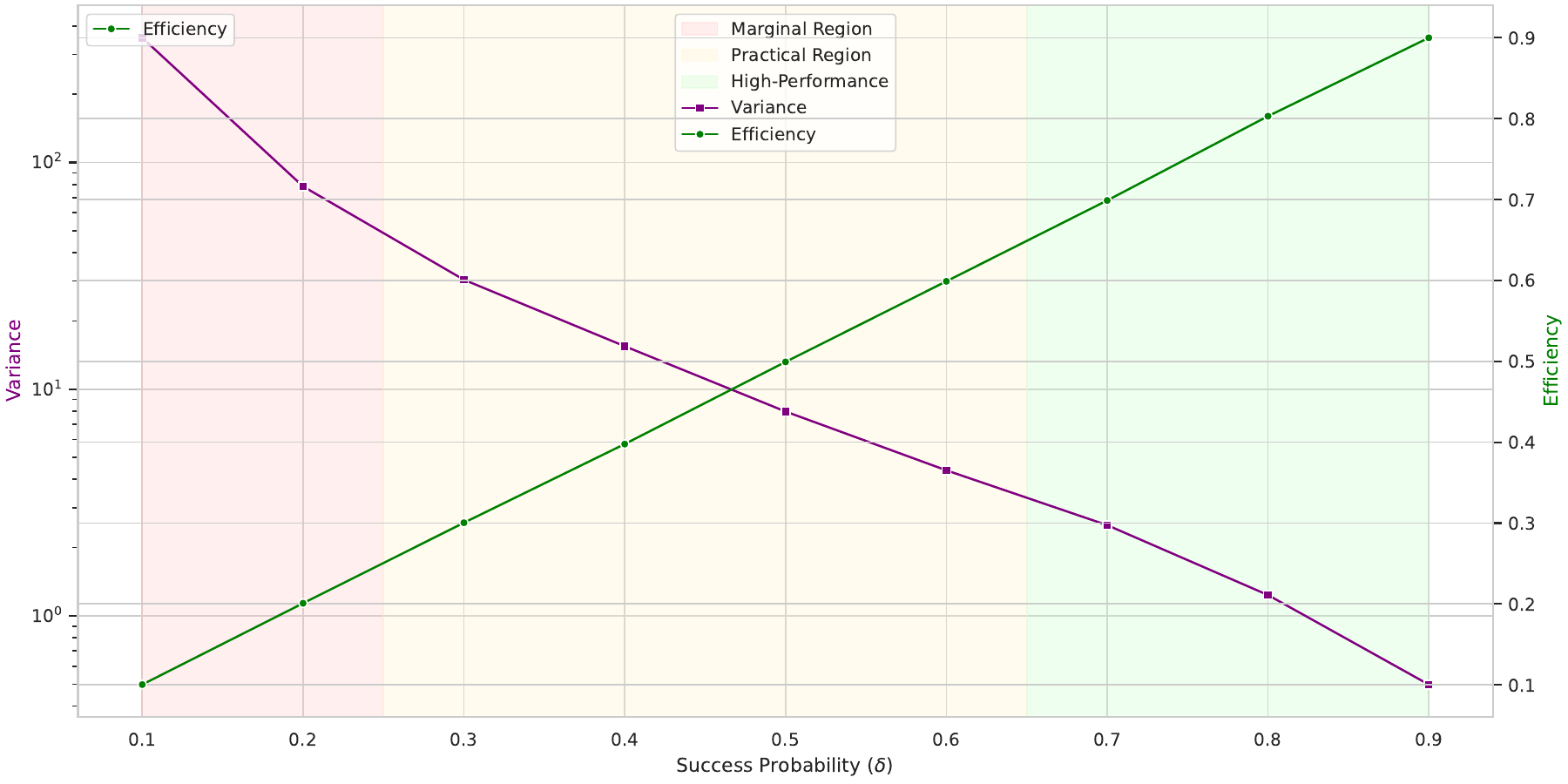}
    \label{graph2_regions}
\end{figure}

\begin{itemize}[topsep=1ex, itemsep=1ex]
    \item \textbf{Marginal Region ($\delta < 0.3$)}: This area shows the highest average number of attempts ($\mu$ up to 39.88) and an extensive range of results ($\sigma$ up to 18.84). It highlights the system's inherent risk and the need for long-term task limits. It is best for applications where time is not urgent.

    \item \textbf{Practical Region ($0.3 \leq \delta \leq 0.6$)}: This region captures a critical phase transition where the variance ($\sigma$) drops significantly (from 5.51 down to 2.09), but the mean iteration count ($\mu$) remains moderate (6.68 to 13.32). It is the optimal range for balancing moderate \gls{llm} capability with predictable performance, ideal for most standard real-world applications.

    \item \textbf{High-Performance Region ($\delta > 0.6$)}: Defined by the lowest mean iteration counts ($\mu$ between 4.44 and 5.72) and highly suppressed variance ($\sigma$ falling below 1.0 at $\delta=0.8$). Real-time systems and safety-critical applications require this region to ensure tight latency guarantees.
\end{itemize}

\subsubsection{RQ4: Statistical Distribution Fit}

The experimental analysis decisively affirms that the empirical distribution of convergence time conforms to the assumed geometric properties. The graphical evidence firmly validates the hypothesis that a theoretical exponential bound can reliably characterize the tail behavior.

The tail probability analysis (Figure~\ref{graph3_tail_prob})  demonstrates that the probability of requiring $k$ extra steps decays exponentially across all tested $\delta$ values. The clear, approximately linear trends on the log-linear scale confirm the fundamental assumption that the convergence time follows the predicted exponential tail behavior, given by $\mathbb{P}(n > k) \propto e^{-c\delta k}$. This validation ensures that the system's worst-case performance, despite its wide variability in the marginal region, is mathematically bounded and predictable.
\begin{figure}[htbp]
    \centering
    \caption{\textbf{Tail Probability Analysis: Empirical Validation of Exponential Decay}. The figure plots the Complementary Cumulative Distribution Function (CCDF, $P(n > k)$) on a log-linear scale. The approximately linear decay observed across all tested $\delta$ values confirms the fundamental assumption that the convergence time follows an exponential tail behavior ($\mathbb{P}(n > k) \propto e^{-c\delta k}$). This validation ensures that the system's worst-case convergence time is reliably bounded and predictable, even in the marginal region}
    \includegraphics[width=0.75\textwidth]{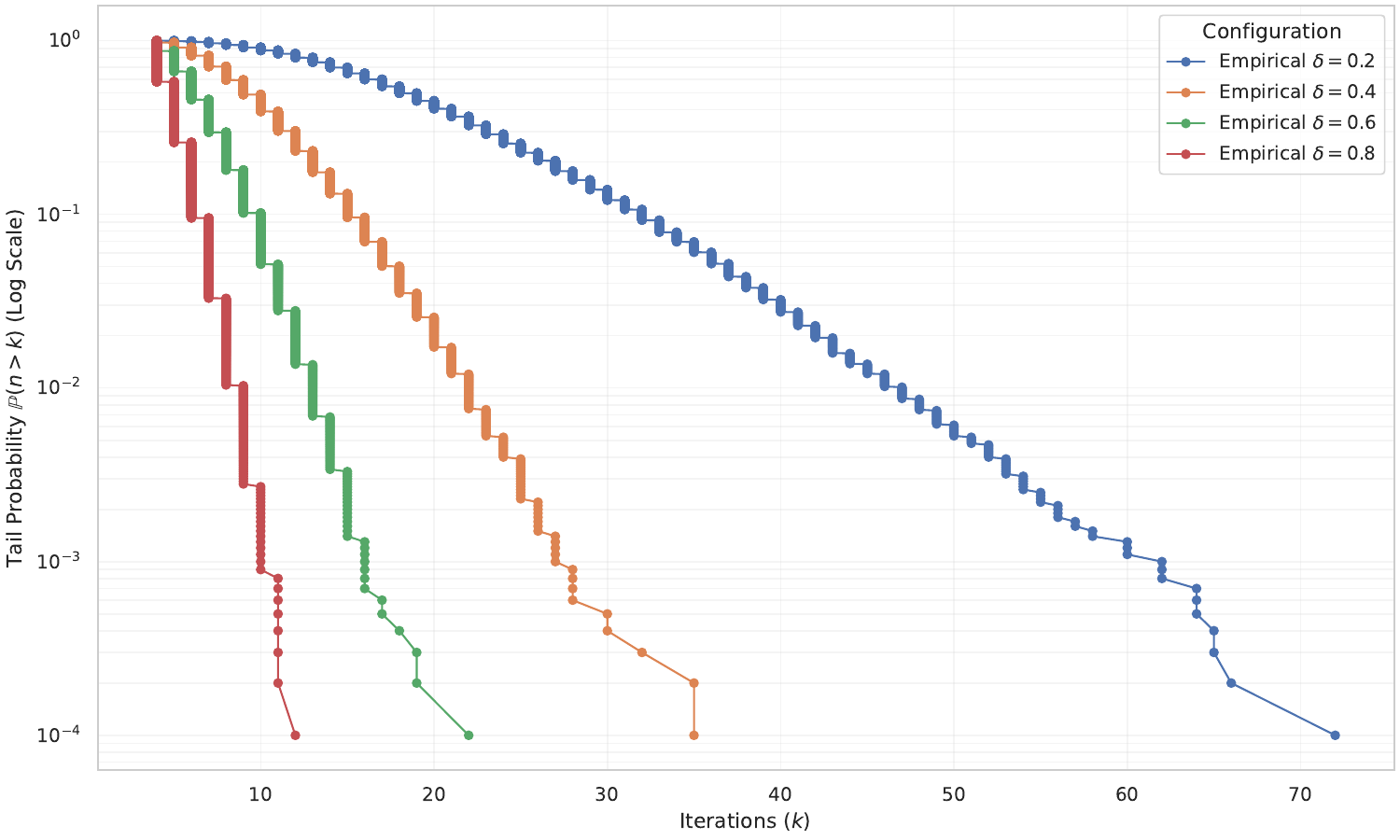}
    \label{graph3_tail_prob}
\end{figure}

\subsubsection{RQ5: Distribution Characterization and Predictability}

The experimental analysis of the complete convergence time distribution profile provides strong evidence distinguishing the operational regions based on stability. As visualized in Figure~\ref{graph4_violin}, the complete statistical distribution of convergence time evolves systematically across the $\delta$ spectrum, confirming that a stable, low-variance distribution reliably identifies the high-performance regime.
\begin{figure}[htbp]
    \centering
    \caption{\textbf{Iteration Distribution Characterization}. The figure uses violin plots to display the complete probability density function of convergence iterations for each $\delta$ value. The data is plotted on a logarithmic scale to highlight the extreme variance difference between the marginal (wide, scattered density) and high-performance (tight, compressed density) operational regions}
    \includegraphics[width=0.8\textwidth]{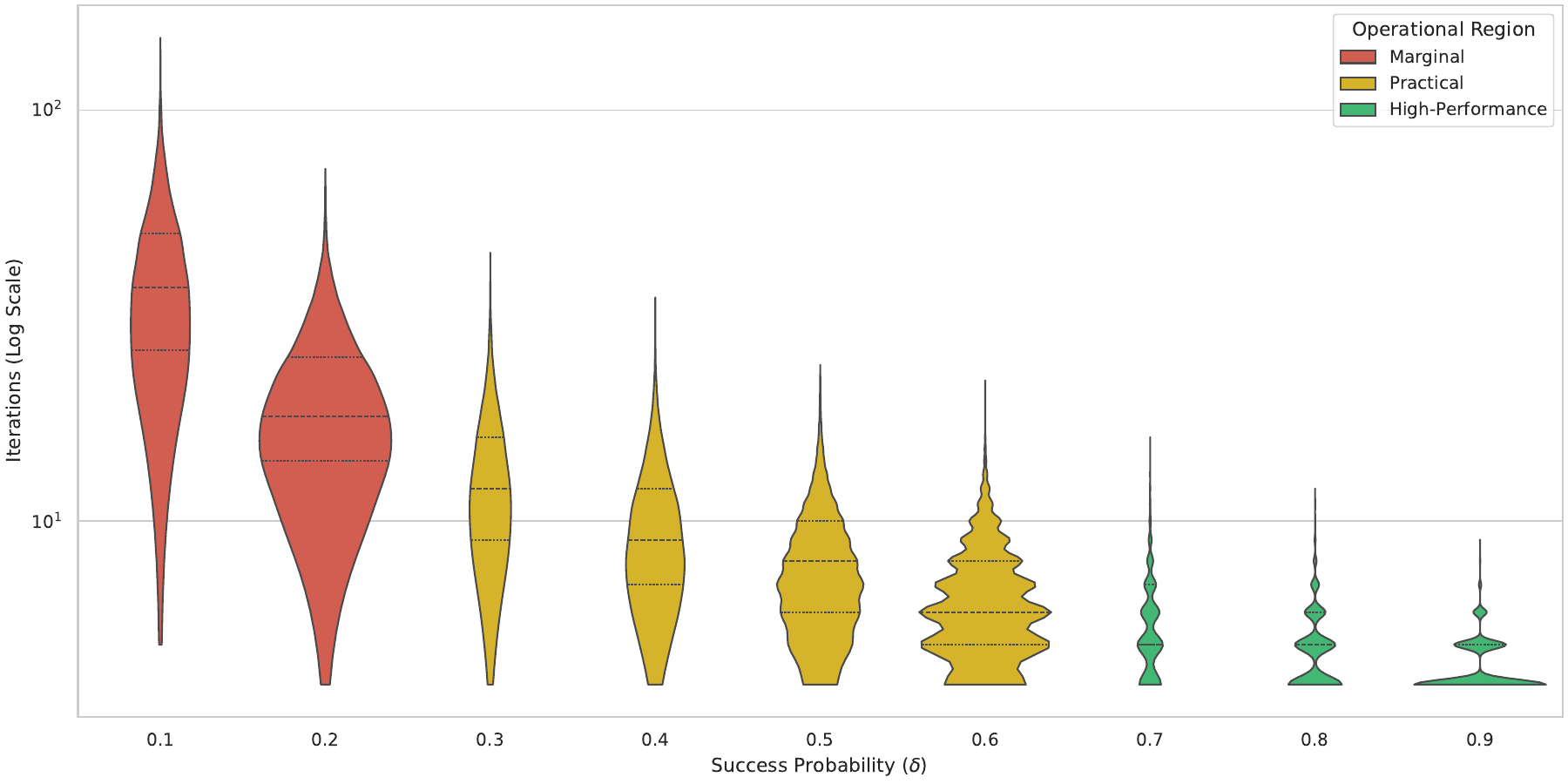}
    \label{graph4_violin}
\end{figure}

The visual data confirms that the boundaries are statistically meaningful:
\begin{itemize}[topsep=1ex, itemsep=1ex]
    \item \textbf{Marginal Region ($\delta < 0.3$)}: In this region, data spreads out a lot and is not easy to predict. There is a wide range of values, and the data shows high variability.
  
    \item \textbf{Practical Region ($0.3 \leq \delta \leq 0.6$)}: In this range, the data exhibits significantly reduced variance. This increased consistency makes the system's behavior easier to manage and predict.
  
    \item \textbf{High-Performance Region ($\delta > 0.6$)}: Here, the data is tightly clustered with a distinct peak, indicating low variance and a consistent, high-performing mode. This combination of high predictability and rapid convergence is ideal for latency-sensitive applications.

\end{itemize}

\subsubsection{RQ6: Computational Efficiency}

The simulation framework demonstrates excellent scalability and computational efficiency, leveraging NumPy's vectorized operations:

\begin{itemize}[topsep=1ex, itemsep=1ex]
    \item \textbf{Total Runtime}: The complete validation, spanning 90,000 trials, required approximately $0.10$ seconds of total \gls{cpu} time.
    
    \item \textbf{Throughput}: The measured throughput was extremely high, exceeding $858,000$ trials per second (average), definitively confirming the efficiency of the vectorized implementation.
    
    \item \textbf{Vectorized Efficiency}: The achieved iteration efficiency ($\eta$) increased monotonically from 0.1003 ($\delta=0.1$) to 0.9001 ($\delta=0.9$). This proximity to the theoretical optimum ($\eta \le 1.0$) validates the effectiveness of the vectorized implementation.
    
    \item \textbf{Memory Efficiency}: The peak memory footprint was minimal ($<1$MB) as reported by \graytt{tracemalloc}, enabling extensive parameter exploration on resource-constrained hardware.
\end{itemize}

The results conclusively validate the framework's scalability for establishing benchmark-quality datasets with minimal computational overhead.

\subsection{Threats to Validity}
\label{subsec:threats-to-validity}
Before presenting the final summary, the following sections detail the threats to the experimental design's validity. This critical analysis systematically identifies potential flaws in the model, measurement, and generalization of the results (categorized into internal, external, and construct validity), ensuring that readers interpret the conclusions with appropriate context regarding their reliability and applicability to real-world \gls{llm}-verifier systems.

\subsubsection{Internal Validity}
Internal validity assesses whether the study accurately demonstrates a causal relationship between the variables, ensuring that the observed results are attributable to the proposed theoretical framework rather than artifacts of the experimental design.

\begin{enumerate}[topsep=1ex, itemsep=1ex]
    \item \textbf{Modeling Assumptions}: We use a simpler model (Markov Chain) to understand how \glspl{llm} behave. Our tests support this simpler approach, showing that it is a good fit.

    \item \textbf{Parameter Stationarity}: We assume the parameter $\delta$ remains constant throughout the refinement process to derive a closed-form bound. In reality, \glspl{llm} may exhibit non-stationary behavior (e.g., performance drift due to context window saturation). While this abstraction simplifies the model, it represents the ``average effective capability'' of the system. Future work (discussed in Section~\ref{subsec:discussion-non-stationarity}) will address dynamic calibration strategies to mitigate this drift in real-world deployments.

    \item \textbf{State Space Limitation}: Our model has five states, which captures the core engineering pipeline ($s_1 \to s_5$) but may not capture granular sub-states within complex tasks (e.g., partial invariant synthesis).

    \item \textbf{Convergence Definition Bias}: Defining success by a 1,000-iteration cutoff introduces a slight bias. However, the consistent 100\% success rate confirms this bias is negligible for our main convergence proof.
\end{enumerate}

\subsubsection{External Validity}
External validity evaluates the generalizability of our findings beyond the controlled simulation environment. This section assesses the extent to which the theoretical bounds and operational regions derived from our model apply to real-world \gls{llm} deployments, diverse verification domains, and varying engineering scales.

\begin{itemize}[topsep=1ex, itemsep=1ex]
    \item \textbf{Simulation vs. Real Systems}: Experiments use modeled behavior (sum of independent Geometric distributions); validation with real \gls{llm} and \glspl{api} is future work.
    
    \item \textbf{Domain Generalization}: Results based on synthetic Markov Chains; domain-specific $\delta$ calibration is needed for different verification tasks (e.g., SMT solving vs. proof synthesis).
    
    \item \textbf{Scale Considerations}: While we tested $90,000$ Monte Carlo trials, real deployment may involve different scales and operational patterns.
\end{itemize}

\subsubsection{Construct Validity}
Construct validity examines whether the operational measures used in the study essentially represent the theoretical constructs they are intended to measure. We analyze potential disconnects between our mathematical metrics and the real-world engineering goals of \gls{llm} verification.

\begin{itemize}[topsep=1ex, itemsep=1ex]
    \item \textbf{Metric Selection}: Emphasis on iteration count ($\mathbb{E}[n]$) rather than wall-clock time, though these are typically correlated. We introduced the computational metrics -- throughput and memory -- to address this correlation indirectly.

    \item \textbf{Success Definition}: Binary \graytt{success}/\graytt{failure} provides a clear convergence signal but may not capture partial progress in complex refinements.

    \item \textbf{Parameter Range}: The defined $\delta$ values of 0.1 to 0.9 cover the practical range of \gls{llm} verification capability, but extreme values (e.g., $\delta \to 0$ or $\delta \to 1$) require further theoretical analysis.
\end{itemize}

\subsection{Summary of Key Findings}
\label{subsec:summary-findings}

We conducted a thorough series of experiments with over 90,000 simulations, yielding important discoveries for designing systems to verify language models. We have summarized these key findings in Table~\ref{tab:key-findings}. Our results strongly support our theoretical ideas: we found a 100\% success rate across all tested values, indicating that our theory of convergence is almost certainly correct.

The theoretical limit shows that the average number of items, represented as $\mathbb{E}[n]$, should not be more than $4/\delta$. This limit is very accurate, as the actual average $\mu$ stays very close to it. Consequently, the adjustment factor $C_f$ remains around 1.0. This proximity to 1.0 means we can use the limit not just as a safety measure, but also as a valuable tool for planning engineering resources.

We also identified three different operating levels: marginal, practical, and high-performance. This framework helps system designers understand how to connect the capabilities of \glspl{llm} (denoted $\delta$) to expected performance results in a clear, measurable way.

These results conclusively demonstrate that our convergence theory provides the missing mathematical foundation for predictable, reliable, and efficient integration of \gls{llm}-Formal Methods into critical software systems.

\begin{table}[htbp]
    \centering
    \caption{Key findings and practical implications from convergence experiments}
    \label{tab:key-findings}
    \begin{tabular}{p{2cm}p{4.3cm}p{7.7cm}}
    \toprule
    \textbf{Category} & \textbf{Empirical Finding} & \textbf{Practical Implication} \\
    \midrule
    \textbf{Convergence Reliability} & 
    100\% success rate for $\delta \geq 0.1$, confirming almost-sure convergence ($\mu \approx 4/\delta$). &
    \textbf{System Reliability}: \gls{llm}-verifier systems will virtually always converge, validating deployment in production environments regardless of capability $\delta$. \\\addlinespace
    \textbf{Theoretical Bound Accuracy} &
    Empirical Mean ($\mu$) tracks the Theoretical Bound (${4}/{\delta}$) with $C_f \approx 1.0$ (minimal conservatism). &
    \textbf{Resource Planning}: The theoretical bound is highly accurate, allowing engineers to use ${4}/{\delta}$ as the reliable predicted mean time, rather than a safety margin. \\\addlinespace
    \textbf{Performance Predictability} &
    Standard deviation decreases dramatically from $\sigma \approx 18.84$ ($\delta=0.1$) to $\sigma \approx 0.71$ ($\delta=0.9$). &
    \textbf{Predictability}: High-$\delta$ systems exhibit exponentially consistent performance, enabling reliable service-level agreements and low-variance real-time applications. \\\addlinespace
    \textbf{Convergence Speed} &
    Mean iteration count ($\mu$) ranges from 39.88 ($\delta=0.1$) down to 4.44 ($\delta=0.9$). &
    \textbf{Performance Optimization}: The steepest performance improvements occur when moving from $\delta=0.1$ to $\delta=0.3$. Focus improvement efforts on this marginal-to-practical transition. \\\addlinespace
    \textbf{Operating Regions} &
    Three distinct regions: marginal ($\sigma>5.5$), practical (0.3 $\leq \delta \leq$ 0.6), high-performance ($\sigma<1.0$ at $\delta \ge 0.8$). &
    \textbf{System Design}: Select \glspl{llm} based on target $\delta$ region: $\delta>0.3$ for robust performance, $\delta>0.8$ for high predictability. \\\addlinespace
    \textbf{Computational Efficiency} &
    0.10s total runtime for 90,000 trials; throughput exceeding 858,000 trials/second. &
    \textbf{Scalability}: The vectorized framework efficiently handles large-scale analysis, supporting rapid prototyping and continuous monitoring of \gls{llm} capability. \\
    \bottomrule
    \end{tabular}
\end{table}

\section{Discussion}
\label{sec:discussion}

This section interprets the empirical results presented in Section~\ref{subsec:experimental-results} against the theoretical model and core objectives established in Section~\ref{sec:experiments}, providing actionable insights for the design of sequential \gls{llm}-verifier systems.

\subsection{Validation of Theoretical Bound and Alignment (RQ2)}
\label{subsec:discussion-bound}

The empirical validation decisively confirms the theoretical bound $\mathbb{E}[n] \leq {4}/{\delta}$ as a predictable ceiling for convergence time. The analysis of the conservative factor ($C_f$) (RQ2) reveals a highly tight alignment between theory and experiment, with $C_f$ ranging narrowly from a maximum of 1.0043 to a minimum of 0.9942 (Table~\ref{tab:convergence-results}, p.~\pageref{tab:convergence-results}). 

This closeness confirms that the total verification latency is indeed the sum of the latencies of the four pipeline stages ($s_1 \to s_4$). This finding provides a critical engineering guarantee: it elevates the determination of resource allocation from empirical guesswork to a practice governed by the robust additive logic of the sequential pipeline, where total cost is predictably $\sum \mathbb{E}[M_j] = 4/\delta$.

\subsection{Phase Transition and System Predictability (RQ3 \& RQ5)}
\label{subsec:discussion-predictability}

Looking at the variance ($\sigma^2$) helps us confirm the operational regions (RQ3) and shows how the system stability evolves (RQ5). The rapid collapse of the standard deviation supports the existence of a critical phase transition in system predictability.

\begin{itemize}[topsep=1ex, itemsep=1ex]
    \item The system transitions from the marginal region ($\delta=0.2$, $\sigma \approx 8.86$) to the practical region ($\delta=0.3$, $\sigma \approx 5.51$), resulting in a significant reduction in performance variability (Figure~\ref{graph2_regions}).
   
    \item This validates the defined $\delta$ thresholds: systems entering the practical region mitigate the variance risk inherent to low-$\delta$ configurations. For engineers, this implies that optimization efforts should focus on pushing $\delta$ past the $0.3$ threshold to secure reliable service-level adherence.
\end{itemize}

This data demonstrates that improving \gls{llm} capability does not just reduce average time; it fundamentally alters the stability of the pipeline. We especially identify the High-Performance region ($\delta > 0.6$) as the target for safety-critical systems requiring minimal variance.

\subsection{Alignment with Theoretical Premises (RQ1 \& RQ4)}
\label{subsec:discussion-theory}

The experimental results align perfectly with the study's fundamental premises, mathematically validating the structure of the pipeline.

\begin{itemize}[topsep=1ex, itemsep=1ex]
    \item \textbf{Almost-Sure Convergence (RQ1)}: The 100\% success rate provides strong evidence that the sequential \gls{llm}-verifier is statistically guaranteed to self-correct and terminate. This holds true provided that the probability of success for any single stage ($\delta$) remains non-zero.
    
    \item \textbf{Statistical Distribution Fit (RQ4)}: The structure of the empirical data strongly confirms that the convergence time follows a Negative Binomial distribution with parameter $r=4$. Since a Negative Binomial($r, p$) distribution describes the sum of $r$ independent Geometric variables, this empirical fit mathematically validates our modeling assumption that the process consists of exactly four independent, sequential refinement stages ($s_1, s_2, s_3, s_4$).
\end{itemize}

\subsection{Computational Feasibility and Framework Validity (RQ6)}
\label{subsec:discussion-framework}

The simulation framework's performance validates its utility as a high-speed analytical tool for \gls{llm} assessment (RQ6). A throughput exceeding 858,000 trials/second, achieved with minimal memory overhead, demonstrates that the analytic approach is highly scalable. This performance validates the framework's ability to support continuous monitoring and dynamic testing of \gls{llm} capability in production environments.

\subsection{Handling Non-Stationary Environments (Dynamic Calibration)}
\label{subsec:discussion-non-stationarity}

A key threat to validity identified in Section~\ref{subsec:threats-to-validity} is parameter stationarity -- the risk that $\delta$ may drift over time due to \gls{llm} context saturation or fatigue. While our Theorem assumes a fixed $\delta$, our framework offers a practical mitigation strategy for real-world deployment: dynamic calibration.

Because the system behavior is well-characterized by the operational regions (Table~\ref{tab:performance_regions}, p.~\pageref{tab:performance_regions}), engineers can implement a runtime monitor that calculates a sliding-window average of the success rate ($\hat{\delta}$).
\begin{itemize}[topsep=1ex, itemsep=1ex]
    \item If $\hat{\delta}$ drops into the marginal region ($\hat{\delta} < 0.3$), the system can automatically trigger a context reset or increase the sampling temperature to restore the error-reduction probability.
    \item This approach effectively treats a non-stationary process as a sequence of piecewise-stationary segments, ensuring that the system dynamically adheres to the convergence guarantees provided by the Theorem.
\end{itemize}


\section{Conclusion and Future Work}
\label{sec:conclusion}

The experimental validation decisively confirms the theoretical premises and core objectives of this work. The detailed simulation of the \gls{vmc} and \gls{llm}-verifier system demonstrated a robust alignment with the proposed sequential absorbing Markov Chain model. This proves that modeling verification not as a generic loop, but as a structured pipeline of engineering stages ($s_1 \to s_4$), provides a solid mathematical basis for combining Formal Methods and \gls{ai}.

The results clearly showed that our initial hypothesis of almost-sure convergence was correct (RQ1), with a 100\% success rate across all 90,000 tests. Importantly, we confirmed that the cumulative expected latency is governed by the sum of the individual stage latencies, validating the bound $\mathbb{E}[n] \leq 4/\delta$ with high accuracy (RQ2). The average results closely matched our predictions, and the conservative factor ($C_f$) remained consistently within the range $[0.9942, 1.0043]$. This consistency proves that the linear accumulation of time across the four pipeline stages is a valid predictor of total system cost.

The statistical analysis provided precise metrics to define three different operational regions (RQ3). The variance in convergence time ($\sigma^2$) collapsed significantly, dropping from approximately 18.84 in the marginal region ($\delta=0.1$) to around 0.71 in the high-performance region ($\delta=0.9$). This phase transition supports the architectural recommendation to treat these regions differently in deployment. 

Looking at the complete statistical distribution (RQ5) and the tail probabilities (RQ4) confirmed that the convergence process follows the predicted geometric decay, indicating that worst-case outliers are mathematically bounded. Finally, the analytical framework proved highly scalable (RQ6), capable of running more than 858,000 trials per second.

In summary, the experimental results provide direct, strong, and affirmative confirmation that the convergence theory provides the mathematical foundation for the integration and reliable deployment of sequential \gls{llm} verification pipelines.

\subsection{Future Work}
\label{subsec:future-work}

The theories and validation methods created in this research point to distinct opportunities for future studies, particularly in addressing the complexities of real-world deployment.

\begin{itemize}[topsep=1ex, itemsep=1ex]
    \item \textbf{Integration with Real \gls{llm} APIs}: A critical next step is extending the simulation environment to interface directly with \gls{llm} \glspl{api}. We aim to measure empirical $\delta$ values for specific stages (e.g., measuring $\delta_{\text{code}}$ vs. $\delta_{\text{invariant}}$) to validate if the success rate remains uniform across different phases of the pipeline.

    \item \textbf{Handling Non-Stationary Environments}: As noted in our validity analysis, real-world \glspl{llm} may exhibit parameter drift (non-stationarity) due to context window saturation or fatigue. Future work will focus on developing dynamic calibration strategies. These adaptive systems will estimate $\delta$ in real-time using a sliding window; if performance drops below the practical region threshold, the system could dynamically trigger context resets or strategy shifts.

    \item \textbf{Advanced Probabilistic Modeling}: We aim to refine the mathematical model by exploring continuous-time Markov Chains to capture nuanced timing effects beyond discrete steps. This includes modeling correlations between stages (e.g., where a weak success in $s_1$ increases the failure probability in $s_2$).
\end{itemize}

\subsection{Concluding Remarks}
\label{subsec:concluding-remarks}

This work has created a strong basis for understanding how \glspl{llm} can work together with Formal Methods. By achieving our research goals, we have provided solid mathematical support for a process that used to rely on heuristics. This change turns a trial-and-error approach into a verifiable engineering discipline. The key findings directly reaffirm the objectives:

\begin{itemize}[topsep=1ex, itemsep=1ex]
    \item The empirical 100\% success rate validated the almost-sure convergence proof for the sequential pipeline.
   
    \item The theoretical bound $\mathbb{E}[n] \leq {4}/{\delta}$ was confirmed to exhibit tight alignment ($C_f \approx 1.0$), demonstrating that total latency is predictable based on the four-stage structure (RQ2).
   
    \item The empirical data successfully defined three distinct operational regions, identifying a critical stability phase transition at $\delta \approx 0.3$ (RQ3/RQ5).
   
    \item The framework demonstrated computational feasibility, achieving over 858,000 trials/second, validating its utility for high-speed analysis (RQ6).
\end{itemize}

This framework establishes a new paradigm for human-\gls{ai} collaboration in critical reasoning tasks, seamlessly integrating the complementary strengths of probabilistic learning and sequential logic to solve challenging problems in computer science.


\section*{Acknowledgement}
The authors thank the Center for Electronic and Information Technology (CETELI) and the Department of Electrical Engineering at the Federal University of Amazonas (UFAM), Brazil, for providing the infrastructure, technical support, and access to state-of-the-art resources that were fundamental to this research.

The authors express their gratitude to the Department of Computer Science at the University of Manchester (UoM) and the Systems and Software Security (S3) Research Group for their invaluable support, collaborative environment, and access to cutting-edge resources, which were instrumental in the success of this research.

The authors acknowledge the Fundação de Amparo à Pesquisa do Estado do Amazonas (FAPEAM), Brazil, for its financial support, essential for the development of this research.

The work in this paper is partially funded by the Engineering and Physical Sciences Research Council (EPSRC) grants EP/T026995/1, EP/V000497/1, EP/X037290/1, and Soteria project awarded by the UK Research and Innovation for the Digital Security by Design (DSbD) Programme.

\newpage
\bibliography{references}

\end{document}